\DeclareMathOperator*{\argmax}{arg\,max}
\newcommand{\norm}[1]{\|#1\|}
\newcommand{\cS}{\mathcal S}
\newcommand{\A}{\mathcal A}
\newcommand{\1}{\mathbb{1}}
\newcommand{\E}{\mathbb{E}}
\newcommand{\event}{\mathcal{E}}
\newcommand{\R}{\mathbb{R}}
\newcommand{\Prob}[0]{\mathbb{P}}
\newcommand{\q}{\tilde q}
\newcommand{\Topt}[1]{T_{#1}}
\newtheorem{lemma}{Lemma}
\newtheorem{corollary}{Corollary}
\newtheorem{theorem}{Theorem}
\newtheorem{definition}{Definition}
\newtheorem{assumption}{Assumption}
\providecommand{\customgenericname}{}
\newcommand{\newcustomtheorem}[2]{%
  \newenvironment{#1}[1]
  {%
   \renewcommand\customgenericname{#2}%
   \renewcommand\theinnercustomgeneric{##1}%
   \innercustomgeneric
  }
  {\endinnercustomgeneric}
}
\newenvironment{proof}[1]{\par\noindent\underline{Proof:}\space#1}{\hfill $\blacksquare$}
\title{Doubly-Asynchronous Value Iteration: \\Making Value Iteration Asynchronous in Actions}
\author{
  Tian Tian \quad Kenny Young \quad Richard S. Sutton\\
  University of Alberta and Alberta Machine Intelligence Institute \\
  Edmonton, Alberta, Canada  \\
  \texttt{\{ttian, kjyoung, rsutton\}@ualberta.ca}\\
}
\begin{document}

\maketitle

\begin{abstract}
Value iteration (VI) is a foundational dynamic programming method, important for learning and planning in optimal control and reinforcement learning.  VI proceeds in batches, where the update to the value of each state must be completed before the next batch of updates can begin.  Completing a single batch is prohibitively expensive if the state space is large, rendering VI impractical for many applications.  Asynchronous VI helps to address the large state space problem by updating one state at a time, in-place and in an arbitrary order.  However, Asynchronous VI still requires a maximization over the entire action space, making it impractical for domains with large action space.  To address this issue, we propose \textit{doubly-asynchronous value iteration} (DAVI), a new algorithm that generalizes the idea of asynchrony from states to states and actions.  More concretely, DAVI maximizes over a sampled subset of actions that can be of any user-defined size.  This simple approach of using sampling to reduce computation maintains similarly appealing theoretical properties to VI without the need to wait for a full sweep through the entire action space in each update.  In this paper, we show DAVI converges to the optimal value function with probability one, converges at a near-geometric rate with probability $1-\delta$, and returns a near-optimal policy in computation time that nearly matches a previously established bound for VI.  We also empirically demonstrate DAVI's effectiveness in several experiments. 
\end{abstract}

\section{Introduction}
Dynamic programming has been used to solve many important real-world problems, including but not limited to wireless networks \citep{wireless1, wireless2, wireless3} resource allocation \citep{powell2002adaptive}, and inventory problems \citep{bensoussan2011dynamic}.  Value iteration (VI) is a foundational dynamic programming algorithm central to learning and planning in optimal control and reinforcement learning.  

VI is one of the most widely studied dynamic programming algorithm \citep{Williams93analysisof, neuro, mdp}. VI starts from an arbitrary value function and proceeds by updating the value of all states in batches.  The value estimate for each state in the state space $\cS$ must be computed before the next batch of updates will even begin:
\begin{align}
    v(s) \leftarrow \max_{a \in \A} \left \{r(s,a) + \gamma \sum_{s' \in \cS} p(s'|s,a) \bar v(s') \right\} \quad \text{for all } s \in \cS, \label{eq:vi}
\end{align}
where $s' \in \cS$ is the next state, $p(s'|s,a)$ is the transition probability, and $r(s,a)$ is the reward.  VI maintains two arrays of real values $v, \bar v$, both the size of state space $S \doteq |\cS|$, with $\bar v$ being used to make the update and $v$ being used to keep track of the updated values.  In domains with large state space, for example wireless networks where the state-space of the network scales exponentially in the number of nodes, computing a single batch of state value updates using VI is prohibitively expensive, rendering VI impractical.

An alternative to updating the state values in batches is to update the state values one at a time, using the most recent state value estimates in the computation. Asynchronous value iteration \citep{Williams93analysisof, neuro, mdp} starts from an arbitrary value function and proceeds with in-place updates:
\begin{align}
    v(s_n) \leftarrow \max_{a \in \A} \left \{r(s_n,a) + \gamma \sum_{s' \in \cS} p(s'|s_n,a) v(s') \right\}, \label{eq:avi}
\end{align}
where $s_n \in \cS$ is the sampled state for update in iteration $n$ with all other states $s \neq s_n$ value remain the same. 

Although Asynchronous VI helps to address domains with large state space, it still requires a sweep over the actions for every update due to the maximization operation over the look-ahead values: $L^v(s,a) \doteq r(s,a) + \gamma \sum_{s' \in \cS} p(s'|s,a) v(s')$ for a given $v \in \R^S$. Evaluating the look-ahead values, as large as the action space, can be prohibitively expensive in domains with large action space.  For example, running Asynchronous VI would be impractical in fleet management, where the action set is exponential in the size of the fleet.

There have been numerous works that address the large action space problem.  \cite{szepesvari1996generalized} proposed an algorithm called sampled-max, which performs a max operation over a smaller subset of look-ahead values.  Their algorithm resembles Q-learning, requiring a step-size parameter that needs additional tuning.  However, their algorithm does not converge to the optimal value function $v^*$. \cite{Williams93analysisof} presented convergence analysis on a class of asynchronous algorithms, including some that may help address the large action space problem.  \cite{Hubert2021LearningAP} and \cite{danihelka2022policy} explore ways to achieve policy improvement while sampling only a subset of actions in Monte Carlo Tree Search.  
\cite{zaheen} focused on the generation of a smaller candidate action set to be used in planning with continuous action space.   

We consider the setting where we have access to the underlying model of the environment and propose a variant of Asynchronous VI called \textit{doubly-asynchronous value iteration} (DAVI) that generalizes the idea of asynchrony from states to states and actions.  Like Asynchronous VI, DAVI samples a state for the update, eliminating the need to wait for all states to complete their update in batches.  Unlike Asynchronous VI, DAVI samples a subset of actions of any user-defined size via a predefined sampling strategy.  It then computes only the look-ahead values of the corresponding subset and a best-so-far action, and updates the state value to be the maximum over the computed values.  The intuition behind DAVI is the idea of incremental maximization, where maximizing over a few actions could improve the value estimate for a certain state, which helps to evaluate other state-action pairs in subsequent back-ups.  This simple approach of using sampling to reduce computation maintains similarly appealing theoretical properties to VI.  In particular, we show DAVI converges to $v^*$ with probability 1 and at a near-geometric rate with probability $1 - \delta$.  Additionally, DAVI returns an $\epsilon$-optimal policy with probability $1-\delta$ using
\begin{align}
    O\left(m S H_{\gamma, \epsilon} \left(\ln\left(\frac{SH_{\gamma, \epsilon}}{\delta}\right)\Big/\ln\left(\frac{1}{1-q_{min} }\right)\right)\right)
\end{align}
elementary operations, where $m$ is the size of the action subset, $H_{\gamma, \epsilon}$ is a horizon term, and $q_{min}$ is the minimum probability that any state-action pair is sampled. We also provide a computational complexity bound for Asynchronous VI, which to the best of our knowledge, has not been previously reported. Our computational complexity bounds for both DAVI and Asynchronous VI nearly match a previously established bound for VI \citep{vi_bound}.  Finally, we demonstrate DAVI's effectiveness in several experiments.

Related work by \cite{zeng2020asyncqvi} uses an incremental maximization mechanism that is similar to ours.  However, their work focuses on a different setting from ours, an asynchronous parallel setting, where an algorithm is hosted on several different machines running concurrently without waiting for synchronization of the computations.  Aside from this difference, their work considers the case where the agent has access to only a generative model \citep{kearns2002sparse} of the environment, whereas we assume full access to the transition dynamics and reward function.  They also provided a computational complexity bound, which differs significantly from ours due to differences in settings.  

\section{Background} \label{sec:background}
We consider a discounted Markov Decision Process $(\cS, \A, r, p, \gamma)$ consisting of a finite set of states $\cS$, a finite set of actions $\A$, a stationary reward function $r: \cS \times \A \to [0,1]$, a stationary transition function $p:\cS \times \A \times \cS \to [0,1]$, and a discount parameter $\gamma \in [0,1)$.  The stochastic transition between state $s \in \cS$ and the next state $s' \in \cS$ is the result of choosing an action $a \in \A$ in $s$.  For a particular $s,a \in \cS \times \A$, the probability of landing in various $s'$ is characterized by a transition probability, which we denote as $p(s'|s,a)$.  

For this paper, we use $\Pi$ to denote a set of deterministic Markov polices $\{ \pi: \cS \to \A \}$.  The value function of a state evaluated according to $\pi$ is defined as  $v_{\pi}(s) \doteq \E\left[ \sum_{n=0}^{\infty} \gamma^{n} r(s_n, a_n) | s_0 = s \right]$ for a $s \in \cS$.  The optimal value function for a state $s \in \cS$ is then $ v^*(s) \doteq \max_{\pi \in \Pi} v_{\pi}(s)$ and there exists a deterministic optimal policy $\pi^*$ for which the value function is $v^*$.  For the rest of the paper, we will consider $v$ as a vector of values of $\R^S$.  For a fixed $\epsilon > 0$, a policy $\pi$ is said to be $\epsilon$-optimal if $v_{\pi} \geq v^* - \epsilon \1$.  Finally, we will use $\norm{\cdot}$ to denote infinity norm (i.e, $\norm{v} = \max_i |v_i|$) and $A$ to denote the size of the action space.

\section{Making value iteration asynchronous in actions}
We assume access to the reward and transition probabilities of the environment.  In each iteration $n$, DAVI samples a state $s_n$ for the update and $m$ actions from the action set $\A$.  It then computes the corresponding look-ahead values of the sampled actions and the look-ahead value of a best-so-far action $\pi_n(s_n)$.  Finally, DAVI updates the state value to the maximum over the computed look-ahead values.  The size of the action subset can be any user-defined value between $1$ and $A$.  To maintain a best-so-far action, one for every state amounts to maintaining a deterministic policy in every iteration.  Recall $L^v(s,a) \doteq r(s, a) + \gamma \sum_{s' \in \cS} p(s'|s,a)v(s')$ for a given $s,a \in \cS \times \A$ and $v \in \R^S$, the pseudo-code for DAVI is shown in \cref{alg:davi}.

\begin{algorithm}[h!] 
    \SetAlgoLined
    \DontPrintSemicolon
 	\KwIn{State sampling distribution $p \in \Delta(S)$}
	\KwIn{A potentially state conditional distribution over the sets of actions of size $m$ denoted by $q$}
	\KwIn{Number of iterations $\tau$, see \cref{cor:complexity} for how to choose $\tau$ to obtain an $\epsilon$-optimal policy with high probability}
	Initialize the value function $v_0 \in \R^{S}$ \; 
    Initialize the policy $\pi_0$ to an arbitrary deterministic policy \;
    \For{$n = 0,\cdots,\tau$}{ 
        Sample a state from $p$ \;
        Sample $m$ actions from $q(\cdot | s_n)$ and let it be $\A_n$ \;
        Let $a^*_n = \argmax_{a \in A_n} L^{v_n}(s,a)$ with ties broken randomly \;
        Value back-up step: \;
        $v_{n+1}(s) = 
        \begin{cases}
            \max\left \{L^{v_n}(s,a_n^*), L^{v_n}(s, \pi_n(s)) \right\}  & \text{if } s = s_n 
            \\
            v_n(s) & \text{otherwise}
        \end{cases}$ \;
        Policy improvement step:\;
        $\pi_{n+1}(s) = 
        \begin{cases}
            a_n^* & \text{if } s = s_n \text{ and } L^{v_n}(s,a_n^*) > L^{v_n}(s,\pi_n(s))   \\
            \pi_{n}(s) & \text{otherwise}
        \end{cases}$ \;
    }
    \Return{$v_n, \pi_n$}
    \caption{DAVI($m, p,q, \tau$)} \label{alg:davi}
\end{algorithm}
Note that the policy will only change if there is another action in the newly sampled subset whose look-ahead value is strictly better than the current best-so-far action's look-ahead value.

\section{Convergence}\label{sec:conv}
We show in \cref{thm:conv} that DAVI converges to the optimal value function despite only maximizing over a subset of actions in each update.  Before showing the proof, we establish some necessary definitions, lemmas, and assumptions.  

\begin{definition}[$q_{min}$ and $p_{min}$] \label{def:1}
    Recall $p$ is a distribution over states and $q$ is a potentially state conditional distribution over the sets of actions of size $m$.  Then, we will use $\q(s,a)$ to denote the joint probability that a single state $s \in \cS$ is sampled for update with a particular action $a$ included in the set sampled by $q$.  Furthermore, let $q_{min} \doteq \min_{s,a} \q(s,a)$ and $p_{min} \doteq \min_{s} p(s)$.
\end{definition}

\begin{definition}[Bellman optimality operator and policy evaluation] \label{def:bell}
    Let $T: \R^S \to \R^S$.  For all $s \in \cS$, define $Tv(s) \doteq \max_{a \in \A} L^v(s,a)$.
    Let $T_{\pi}: \R^S \to \R^S$.  For a given $\pi$, for all $s \in \cS$, define $T_{\pi}v(s) \doteq L^{v}(s, \pi(s))$.
\end{definition}

\begin{definition}[DAVI back-up operator $T_n$] \label{def:davi_op}
Let $T_{n}: \R^S \to \R^S$. For a given $\A_n \sim \q$, $\pi_n \in \Pi$, $s_n \in \cS$, and for all $s \in \cS$ and $v \in \R^S$, define
\begin{align}
T_{n}v(s) \doteq
\begin{cases}
    \max_{a \in \A_n \cup \pi_n(s)} L^v(s,a) & \text{for } s = s_n \\
    v(s) & \text{otherwise}.
\end{cases}
\end{align}
\end{definition}

We show in \cref{appendix:davi} that $T_n$ is a monotone operator. 

\begin{assumption}[Initialization] \label{asumpt:init}
We consider the following initialisations, (i) $v_0 = 0 \1$, or (ii) $v_0 = -c \1$ for $c > 0$, or (iii) $v_0(s) \leq L^{v_0}(s,\pi_0(s))$ for all $s \in \cS$. 
\end{assumption}

\begin{lemma}[Monotonicity] \label{lemma:monotone}
   If DAVI is initialized according to (i),(ii), or (iii) of \cref{asumpt:init}, the value iterates of DAVI, $(v_n)_{n\geq 0}$ is a monotonically increasing sequence: $v_n \leq v_{n+1}$ for all $n \in \mathbb{N}_{0}$, if $r(s,a) \in [0,1]$ for any $s,a \in \cS \times \A$.  
\end{lemma}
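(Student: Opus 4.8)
The plan is to prove the statement by induction, but with a strengthened hypothesis. Bare monotonicity $v_n \le v_{n+1}$ is not self-propagating: the update at $s_n$ replaces $v_n(s_n)$ by a maximum that includes $L^{v_n}(s_n,\pi_n(s_n))$, and nothing a priori forces this look-ahead value to dominate $v_n(s_n)$. I would therefore carry along the invariant
\begin{equation}
v_n(s) \le L^{v_n}(s,\pi_n(s)) \quad \text{for all } s \in \cS, \label{eq:inv}
\end{equation}
that is, $v_n \le T_{\pi_n} v_n$. Granting \eqref{eq:inv} at step $n$, monotonicity is immediate: for $s \ne s_n$ we have $v_{n+1}(s)=v_n(s)$, while for $s=s_n$,
\[
v_{n+1}(s_n)=\max\{L^{v_n}(s_n,a_n^*),L^{v_n}(s_n,\pi_n(s_n))\}\ge L^{v_n}(s_n,\pi_n(s_n))\ge v_n(s_n),
\]
where the last inequality is exactly \eqref{eq:inv} at $s_n$. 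So it suffices to show \eqref{eq:inv} holds for every $n$.

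For the base case I would check \eqref{eq:inv} at $n=0$ for each initialization. Under (iii) it holds by assumption. Under (i), $v_0=0\1$ gives $L^{v_0}(s,\pi_0(s))=r(s,\pi_0(s))\ge 0=v_0(s)$ since $r\ge 0$. Under (ii), $v_0=-c\1$ gives $L^{v_0}(s,\pi_0(s))=r(s,\pi_0(s))-\gamma c\ge -\gamma c\ge -c=v_0(s)$, using $r\ge 0$, $c>0$, and $\gamma<1$.

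The inductive step re-establishes \eqref{eq:inv} at $n+1$, and the key tool is monotonicity of the look-ahead map in its value argument: since $\gamma\ge 0$ and $p(\cdot\mid s,a)\ge 0$, whenever $v\le w$ we have $L^v(s,a)\le L^w(s,a)$ for every $(s,a)$. Using the monotonicity $v_n\le v_{n+1}$ just deduced, I split on the state. For $s\ne s_n$ both the value and the policy action are unchanged, so $v_{n+1}(s)=v_n(s)\le L^{v_n}(s,\pi_n(s))\le L^{v_{n+1}}(s,\pi_{n+1}(s))$ by the hypothesis followed by $L$-monotonicity. For $s=s_n$ I consider the two branches of the policy-improvement step: if the policy updates, then $\pi_{n+1}(s_n)=a_n^*$ and $v_{n+1}(s_n)=L^{v_n}(s_n,a_n^*)\le L^{v_{n+1}}(s_n,\pi_{n+1}(s_n))$; otherwise $\pi_{n+1}(s_n)=\pi_n(s_n)$ and $v_{n+1}(s_n)=L^{v_n}(s_n,\pi_n(s_n))\le L^{v_{n+1}}(s_n,\pi_{n+1}(s_n))$. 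In both cases $L$-monotonicity closes the gap, so \eqref{eq:inv} holds at $n+1$ and the induction is complete.

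The main obstacle is recognizing that the stated claim is not directly inductive and identifying the correct strengthening \eqref{eq:inv}, together with noting that the value back-up and the policy-improvement step must be analyzed jointly so that the invariant survives a policy change. Once \eqref{eq:inv} is in hand, the remainder is a routine case analysis driven by the definition of the update and the monotonicity of $L$ in its value argument.
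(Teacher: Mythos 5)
Your proof is correct, but it takes a different route from the paper's. The paper's induction hypothesis is only the chain $v_n \geq v_{n-1} \geq \cdots \geq v_0$, and its key step handles the problematic inequality $L^{v_n}(s_n,\pi_n(s_n)) \geq v_n(s_n)$ by tracing back to the last iteration $n-j$ at which $s_n$ was updated: there $v_n(s_n)$ was set to exactly $L^{v_{n-j}}(s_n,\pi_{n-j+1}(s_n))$ and the policy at $s_n$ has not changed since, so monotonicity of $L$ in its value argument closes the gap (with a separate case for states never yet updated, which falls back to the base-case computation). You instead strengthen the induction hypothesis to the invariant $v_n \leq T_{\pi_n}v_n$, i.e.\ $v_n(s) \leq L^{v_n}(s,\pi_n(s))$ for all $s$, which propagates locally through the joint value back-up and policy-improvement step and makes monotonicity immediate, with no bookkeeping about update histories. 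Your version is arguably cleaner and does double duty: the paper separately proves exactly your invariant by a second induction in the proof of \cref{cor:complexity} (where it is needed to conclude $v_{\pi_n} \geq v_n$), so your argument merges the two inductions into one. The paper's history-tracing argument, on the other hand, makes the role of the persistent best-so-far action more explicit and mirrors the structure of the analogous Asynchronous VI lemma in \cref{appendix:avi}. Both proofs rely on the same two ingredients (the base-case check of the invariant under initializations (i)--(iii), and monotonicity of $L^v(s,a)$ in $v$), so the difference is one of organization rather than substance.
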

\begin{proof}
See \cref{appendix:davi}.
\end{proof}

\begin{lemma}[Boundedness \citep{Williams93analysisof}] \label{lemma:bounded}
    Let $v_{\max} = \max_{s} v_0(s)$ and $v_{\min} = \min_{s} v_0(s)$ and recall that any reward $\in [0,1]$.  If we start with any $(v_0, \pi_0)$, then applying DAVI's operation on the $(v_0, \pi_0)$ thereafter, will satisfy: $
        \min\left\{0, v_{\min} \right\} \leq V_n(s) \leq \max\left\{\frac{1}{1-\gamma}, v_{\max} \right\}$,
    for all $s \in \cS$ and for all $n \in \mathbb{N_0}$.
\end{lemma}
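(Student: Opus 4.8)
The plan is to prove the two-sided bound by induction on the iteration index $n$, treating the upper and lower bounds in parallel. Write $U \doteq \max\{\tfrac{1}{1-\gamma}, v_{\max}\}$ for the claimed upper bound and $\ell \doteq \min\{0, v_{\min}\}$ for the claimed lower bound. The base case $n=0$ is immediate: since $\ell \leq v_{\min} \leq v_0(s) \leq v_{\max} \leq U$ for every $s$, the iterate $v_0$ already lies in $[\ell, U]$ coordinate-wise.

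For the inductive step, suppose $\ell \leq v_n(s) \leq U$ for all $s \in \cS$. Recalling \cref{alg:davi}, only the sampled state $s_n$ is modified, while every other coordinate satisfies $v_{n+1}(s) = v_n(s)$ and hence stays in $[\ell, U]$ by the inductive hypothesis. At $s = s_n$ the new value is $\max\{L^{v_n}(s_n, a_n^*), L^{v_n}(s_n, \pi_n(s_n))\}$, so it suffices to show that a generic look-ahead value $L^{v_n}(s_n, a)$ lies in $[\ell, U]$ for every action $a$; the maximum of two such quantities then inherits the same bounds.

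The key step is the elementary estimate on $L^{v_n}(s_n, a) = r(s_n,a) + \gamma \sum_{s'} p(s'\mid s_n,a)\, v_n(s')$. For the upper bound, using $r \leq 1$ and $v_n \leq U$ together with $\sum_{s'} p(s'\mid s_n,a) = 1$ gives $L^{v_n}(s_n,a) \leq 1 + \gamma U$; since $U \geq \tfrac{1}{1-\gamma}$ implies $(1-\gamma)U \geq 1$, i.e.\ $1 + \gamma U \leq U$, we conclude $L^{v_n}(s_n,a) \leq U$. For the lower bound, using $r \geq 0$ and $v_n \geq \ell$ (a convex combination of numbers $\geq \ell$ is again $\geq \ell$) gives $L^{v_n}(s_n,a) \geq \gamma \ell$, and because $\ell \leq 0$ and $\gamma \in [0,1)$ we have $\gamma \ell \geq \ell$, so $L^{v_n}(s_n,a) \geq \ell$. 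This closes the induction.

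I do not expect a genuine obstacle here: the argument is the standard boundedness calculation, and the only place requiring care is the sign bookkeeping in the lower bound, where scaling the nonpositive quantity $\ell$ by $\gamma \in [0,1)$ must be recognized as increasing it rather than decreasing it. The whole result hinges on the two algebraic facts $1 + \gamma U \leq U$ and $\gamma \ell \geq \ell$, which express exactly that $U$ and $\ell$ are fixed-point--consistent with the one-step contraction toward $[0, \tfrac{1}{1-\gamma}]$ induced by a Bellman-style back-up.
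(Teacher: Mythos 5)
Your proof is correct. The paper itself gives no proof of this lemma --- it is imported from \cite{Williams93analysisof} --- so there is nothing to compare against; your induction (only the coordinate $s_n$ changes, a generic look-ahead value satisfies $L^{v_n}(s_n,a)\le 1+\gamma U\le U$ and $L^{v_n}(s_n,a)\ge \gamma\ell\ge\ell$, and the max of two such values inherits the bounds) is exactly the standard argument one would supply, and it correctly handles the sign issue in the lower bound.
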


\begin{lemma}[Fixed-point iteration \citep{szepesvari2010algorithms}] \label{lemma:fixpoint}
 
Given any $v \in \R^S$, and $T, T_{\pi}$ defined in \cref{def:bell}
    \begin{enumerate}
        \item $v_{\pi} = \lim_{n \to \infty} T_{\pi}^n v$ for a given policy $\pi$.  In particular for any $n \geq 0$, $\norm{T^n_{\pi} v - v_{\pi}} \leq \gamma^n \norm{v - v_{\pi}}$ where $v_{\pi}$ is the unique function that satisfies $T_{\pi} v_{\pi} = v_{\pi}$.
        \item $v^* = \lim_{n \to \infty} T^n v$ and in particular for any $n \geq 0$, $\norm{v^* - T^n v} \leq \gamma^n \norm{v^* - v}$, where $v^*$ is the unique function that satisfies $T v^* = v^*$.
    \end{enumerate}
\end{lemma}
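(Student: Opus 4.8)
The plan is to prove both parts by showing that $T$ and $T_\pi$ are $\gamma$-contractions in the infinity norm on $\R^S$, then invoking the Banach fixed-point (contraction mapping) theorem together with the Bellman equations to identify the unique fixed points as $v^*$ and $v_\pi$.

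First I would establish the contraction property for $T_\pi$. Fixing $u,v \in \R^S$ and $s \in \cS$, the reward term cancels in $T_\pi u(s) - T_\pi v(s)$, and using \cref{def:bell},
\begin{align}
    |T_\pi u(s) - T_\pi v(s)| = \gamma \left| \sum_{s' \in \cS} p(s'|s,\pi(s)) \big(u(s') - v(s')\big) \right| \leq \gamma \norm{u - v},
\end{align}
where the inequality follows from the triangle inequality and the fact that $p(\cdot|s,\pi(s))$ is a probability distribution summing to one. Taking the maximum over $s$ gives $\norm{T_\pi u - T_\pi v} \leq \gamma \norm{u - v}$.

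Next I would prove the analogous bound for $T$. The only extra ingredient is the elementary inequality $\left|\max_a f(a) - \max_a g(a)\right| \leq \max_a |f(a) - g(a)|$, applied with $f(a) = L^u(s,a)$ and $g(a) = L^v(s,a)$; the same reward-cancellation and averaging argument then yields $|Tu(s) - Tv(s)| \leq \max_{a \in \A} \gamma \sum_{s'} p(s'|s,a)|u(s')-v(s')| \leq \gamma\norm{u-v}$, and the maximum over $s$ gives $\norm{Tu - Tv} \leq \gamma \norm{u-v}$. With both contractions in hand, since $(\R^S, \norm{\cdot})$ is complete and $\gamma < 1$, the Banach fixed-point theorem guarantees each operator has a unique fixed point to which the iterates converge from any starting $v \in \R^S$. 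It remains to identify these fixed points via the Bellman equations: unrolling $v_\pi(s) = \E[\sum_{n=0}^\infty \gamma^n r(s_n,a_n)|s_0=s]$ by one step shows $T_\pi v_\pi = v_\pi$, and the optimality equation gives $Tv^* = v^*$. The geometric rate is then immediate by induction on the contraction, $\norm{T^n v - v^*} = \norm{T^n v - T^n v^*} \leq \gamma^n\norm{v - v^*}$, and identically for $T_\pi$.

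I expect the main obstacle to be verifying that $v^*$ is the fixed point of $T$, i.e. establishing the Bellman optimality equation, rather than the contraction estimate, which is routine. Showing $Tv^* = v^*$ requires both that no one-step greedy improvement can exceed $v^*$ and that the optimum is attained by a stationary deterministic policy; this is the genuinely MDP-specific content, whereas everything else reduces to the contraction mapping theorem.
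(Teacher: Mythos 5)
The paper does not prove this lemma at all: it is imported verbatim from \citet{szepesvari2010algorithms} as a known fact, so there is no in-paper argument to compare yours against. Your proposal is the standard textbook proof and is essentially correct: both contraction estimates are right (the reward cancellation plus the fact that $p(\cdot|s,a)$ sums to one gives the $\gamma$ factor, and the inequality $|\max_a f(a) - \max_a g(a)| \leq \max_a |f(a)-g(a)|$ handles the max in $T$), completeness of $(\R^S,\norm{\cdot})$ plus Banach then gives uniqueness of the fixed points and the geometric rate $\norm{T^n v - v^*} \leq \gamma^n \norm{v - v^*}$ exactly as you write.

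The one place where your write-up is a sketch rather than a proof is the identification $Tv^* = v^*$, which you correctly flag as the genuinely MDP-specific content. Unrolling the definition of $v_\pi$ gives $T_\pi v_\pi = v_\pi$ immediately, but for $v^*(s) = \max_\pi v_\pi(s)$ one still has to argue both directions of the optimality equation: that $v^*(s) \leq \max_a L^{v^*}(s,a)$ (a one-step greedy improvement over the best policy cannot lose value) and that $v^*(s) \geq \max_a L^{v^*}(s,a)$ (following the greedy action and then behaving optimally is itself achievable by some policy, which in the finite discounted case can be taken stationary and deterministic). If you fill in that two-sided argument, or alternatively define $v^*$ as the fixed point of $T$ and then separately show it equals $\max_\pi v_\pi$, the proof is complete. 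Given that the paper itself defers to the literature here, your level of detail is appropriate for a supporting lemma, but a self-contained version should not leave $Tv^*=v^*$ as an assertion.
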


\begin{theorem}[Convergence of DAVI] \label{thm:conv}
   Assume that $\q(s,a) > 0$ and $r(s,a) \in [0,1]$ for any $s,a \in \cS \times \A$, then DAVI converges to the optimal value function with probability 1, if DAVI initializes according to (i),(ii), or (iii) of \cref{asumpt:init}.
\end{theorem}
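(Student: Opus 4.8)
The plan is to combine the path-wise structural facts already in hand — monotonicity and boundedness — with a Borel--Cantelli argument, and then pin down the resulting limit as $v^*$ by a sandwich. First I would observe that \cref{lemma:monotone} and \cref{lemma:bounded} are properties of \emph{every} realization of the algorithm: along any sample path each sequence $(v_n(s))_{n}$ is monotonically increasing and bounded above, hence convergent. Thus a (random) limit $v_\infty \doteq \lim_n v_n$ exists pointwise, and the theorem reduces to showing $v_\infty = v^*$ with probability $1$. I would establish this by proving $v_\infty \le v^*$ on every path and $v_\infty \ge v^*$ on a probability-one event.

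For the upper bound I would show by induction that $v_n \le v^*$ for all $n$. The base case $v_0 \le v^*$ holds under each initialization of \cref{asumpt:init}: for (i) and (ii) we have $v_0 \le 0 \le v^*$ since rewards are nonnegative, while for (iii) the hypothesis $v_0 \le T_{\pi_0} v_0$ with monotonicity of $T_{\pi_0}$ gives $v_0 \le \lim_n T_{\pi_0}^n v_0 = v_{\pi_0} \le v^*$ via \cref{lemma:fixpoint}. For the inductive step, the DAVI back-up at $s_n$ (\cref{def:davi_op}) is a maximum of look-ahead values over a subset of actions, so $v_{n+1}(s_n) \le \max_{a\in\A} L^{v_n}(s_n,a) = T v_n(s_n) \le T v^*(s_n) = v^*(s_n)$, using $v_n \le v^*$ and monotonicity of $T$; the remaining coordinates are unchanged. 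Passing to the limit yields $v_\infty \le v^*$.

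For the lower bound I would exploit that each pair $(s,a)$ is sampled infinitely often. Since the draws $(s_n,\A_n)$ are independent across iterations with $\Prob(s_n = s,\, a\in\A_n) = \q(s,a) \ge q_{min} > 0$, the second Borel--Cantelli lemma gives probability one that each of the finitely many pairs is updated infinitely often. On this event, fix $(s,a)$ and let $n_1 < n_2 < \cdots$ be the iterations at which it is sampled. Because $a\in\A_{n_k}$ and $a^*_{n_k}=\argmax_{a'\in\A_{n_k}}L^{v_{n_k}}(s,a')$, the back-up gives
\begin{align}
v_{n_k+1}(s) \;\ge\; L^{v_{n_k}}(s,a_{n_k}^*) \;\ge\; L^{v_{n_k}}(s,a). \nonumber
\end{align}
Letting $k\to\infty$, the left side tends to $v_\infty(s)$ (a convergent sequence forces the shifted subsequence to the same limit), and since $L^{\cdot}(s,a)$ is affine, hence continuous, in its value argument, the right side tends to $L^{v_\infty}(s,a)$; thus $v_\infty(s)\ge L^{v_\infty}(s,a)$. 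As $(s,a)$ was arbitrary, $v_\infty \ge T v_\infty$. Applying the monotone operator $T$ repeatedly gives $v_\infty \ge T^n v_\infty$ for all $n$, and $T^n v_\infty \to v^*$ by \cref{lemma:fixpoint}, so $v_\infty \ge v^*$. Combined with the upper bound, $v_\infty = v^*$ on a probability-one event, which is the claim.

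I expect the main obstacle to be this lower-bound step: making the limit along the sampling subsequence rigorous and arguing that $v_\infty$ is a \emph{super}-fixed-point, $v_\infty \ge T v_\infty$, rather than merely a fixed point of the random sampled operators $T_n$. The continuity of $L$ in $v$ and the fact that convergence of the full sequence pins down the shifted subsequence $n_k+1$ are what make the passage to the limit valid; once $v_\infty \ge T v_\infty$ is secured, \cref{lemma:fixpoint} closes the sandwich immediately.
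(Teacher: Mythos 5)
Your proposal is correct, and while its first half mirrors the paper, the lower-bound argument takes a genuinely different route. The monotone-convergence setup and the upper bound coincide in substance with the paper's: the paper derives $v_n \le T^n v_0 \to v^*$ from $v_n = T_{n-1}v_{n-1} \le T v_{n-1}$, whereas you prove $v_n \le v^*$ directly by induction; these are interchangeable. For the lower bound, the paper partitions time into random rounds $(n_k)$ during each of which every state $s$ is updated at least once with $\pi^*(s)$ in its sampled action set, deduces $v_{n_{k+1}} \ge T_{\pi^*} v_{n_k}$, and iterates to get $v_{n_k} \ge T_{\pi^*}^k v_0 \to v_{\pi^*} = v^*$. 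You instead pass to the limit first: the second Borel--Cantelli lemma gives that every pair $(s,a)$ is sampled infinitely often with probability one, the back-up inequality $v_{n_k+1}(s) \ge L^{v_{n_k}}(s,a)$ together with continuity of $L^{\cdot}(s,a)$ in the value argument yields $v_\infty \ge T v_\infty$, and iterating the monotone operator $T$ with \cref{lemma:fixpoint} closes the sandwich. Your route is more elementary and arguably more rigorous on the probabilistic side: it never needs to identify $\pi^*$ or to coordinate all states within a common round (only marginal infinitely-often sampling of each pair), and Borel--Cantelli is a cleaner justification than the paper's informal remark that the gaps between rounds are ``finite with probability 1.'' What the paper's round-based decomposition buys in exchange is quantitative control: it registers one $\gamma$-contraction per completed round, which is precisely the structure reused in \cref{thm:conv_rate} to obtain the convergence rate, whereas your super-fixed-point argument establishes convergence but yields no rate.
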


\begin{proof}  

By the Monotonicity \cref{lemma:monotone} and Boundedness \cref{lemma:bounded}, DAVI's value iterates $(v_n)_{n \geq 0}$ are a bounded and monotonically increasing sequence.  By the monotone convergence theorem, $\lim_{n \to \infty} v_n = \sup_{n} v_n \doteq \bar v$.  It remains to show that $\bar v = v^*$. We first show $ \bar v \leq v^*$ and then show $\bar v \geq v^*$ to conclude that $\bar v = v^*$.  We note that $v_n = T_{n-1} v_{n-1} \leq T v_{n-1}$, where $T$ is the Bellman optimality operator that satisfies the Fixed-point \cref{lemma:fixpoint} (2).   By the monotonicity of $T$ and the monontonicity of $v_n$'s, for any $n \geq 0$, $v_n \leq T^n v_0$.  By taking the limit of $n \to \infty$ on both sides, we get $\bar v \leq v^*$.  

Now, we show $\bar v \geq v^*$.  Let $(n_{k})_{k=0}^{\infty}$ be a sequence of increasing indices, where $n_0 = 0$, such that between $n_k$-th and $n_{k+1}$-th iteration, all state $s \in \cS$ have been updated at least once with an action set containing $\pi^*(s)$.  We note that the number of iterations between any $n_k$ and $n_{k+1}$ is finite with probability 1 since there is a finite number of states and actions, and all state-action pairs are sampled with non-zero probability.  Then, for any state $s$, let $t(s,k)$ be an iteration index such that $s_{t(s,k)} = s$,  $n_{k} \leq t(s, k) \leq n_{k+1}$, and $\pi^*(s) \in \A_{t(s,k)}$, then
\begin{align}
   &v_{n_{k+1}}(s) \geq v_{t(s,k) + 1}(s) \hspace{1cm} \text{by monotonicity of } v_n \\
   &= \max \left \{\max_{a \in \{\A_{t(s,k)} \cup \pi_{t(s,k)}\} \setminus \pi^*(s)} L^{v_{t(s,k)}}(s,a), L^{v_{t(s,k)}}(s, \pi^*(s)) \right \} \geq L^{v_{t(s,k)}}(s, \pi^*(s)) \\
   &= T_{\pi^*} v_{t(s,k)}(s) \geq T_{\pi^* }v_{n_k}(s) \quad \text{by monotonicity of the operator}. \label{eq:22}
\end{align}
By the $n_{k+1}$-th iteration, $v_{n_{k+1}} \geq T_{\pi^*}v_{n_k}$, where $T_{\pi^*}$ is the policy evaluation operator that satisfies the Fixed-point \cref{lemma:fixpoint} (1).  Continuing with the same reasoning, $v_{n_k} \geq T_{\pi^*}^k v_0$ for any $k \geq 0$.  By taking limit of $k \to \infty$ on both sides, we get $\bar v \geq v^*$.  Altogether, $\bar v = v^*$.
\end{proof}

\bf{Remark 1: } \rm The initialization requirement in the Convergence of DAVI \cref{thm:conv} can be relaxed to be any initialization, and DAVI will still converge to $v^*$ with probability 1.  A more general proof can be found in \cref{appendix:davi}, which follows a similar argument to that of the proof for Theorem 4.3.2 of \cite{Williams93analysisof}.  Intuitively, there exists a finite sequence of value back-up and policy improvement operations that will lead to one contraction, and if there are $l \in \mathbb{N}$ copies of such a sequence, this will lead to $l$ contractions.  Once the value iterates contract into an ``optimality-capture'' region, where all the policies $\pi_n$ are optimal thereafter, DAVI is performing policy evaluations of an optimal policy.  As long as all states are sampled infinitely often, the value iterates must converge to $v^*$.  Finally, we show that such a finite sequence as a contiguous subsequence exists in an infinite sequence of operators generated by a stochastic process.  

\bf{Remark 2: } \rm DAVI could be considered an Asynchronous Policy Iteration algorithm \citep{neuro}  since DAVI consists of a policy improvement step and a policy evaluation step.  However, the algorithmic construct discussed by \cite{neuro} does not exactly match that of DAVI with sampled action subsets.  Consequently, we could not directly apply Proposition 2.5 of \cite{neuro} to show DAVI's convergence.  A more useful analysis is that of \cite{Williams93analysisof}; we could have applied their Theorem 4.2.6 to show DAVI's convergence after having shown that DAVI's value iterates are monotonically increasing in \cref{lemma:monotone}.  However, \cite{Williams93analysisof} provide no convergence rate or computational complexity.  Therefore, we chose to present a different convergence proof, more closely related to the convergence rate proof in the next section.

\section{Convergence rate} \label{sec:conv_rate}
DAVI relies on sampling to reduce computational complexity, which introduces additional errors.  Despite this, we show in \cref{thm:conv_rate} that DAVI converges at a near-geometric rate and nearly matches the computational complexity of VI.

\begin{theorem}[Convergence rate of DAVI] \label{thm:conv_rate}
   Assume $\q(s,a) > 0$ and $r(s,a) \in [0,1]$ for any $s, a \in \cS \times \A$, and also assume DAVI initialises according to (i), (ii), (iii) of \cref{asumpt:init}.  With $\gamma \in [0,1)$ and probability $1-\delta$, the iterates of DAVI, $(v_n)_{n\geq 0}$ converges to $v^*$ at a near-geometric rate. In particular, with probability $1-\delta$, for a given $l \in \mathbb{N}$,
   \begin{align}
       \norm{v^* - v_n} \leq \gamma^l \norm{v^* - v_0},
   \end{align}
   for any n satisfying
   \begin{align}
   n \geq l \left \lceil  \ln \left( \frac{S l}{\delta} \right) \Big / \ln \left(\frac{1}{ 1 - q_{min}}\right) \right \rceil   \label{eq:n},
   \end{align}
   where $q_{min} = \min_{s,a} \q(s,a)$.
\end{theorem}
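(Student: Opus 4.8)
The plan is to combine the deterministic monotonicity structure already used in the proof of \cref{thm:conv} with a high-probability bound on how quickly the needed sampling events occur, and then to ``squeeze'' $v_n$ between a geometrically-contracting lower bound and $v^*$. The key algebraic fact is that $v_{\pi^*} = v^*$, so \cref{lemma:fixpoint}(1) gives $\norm{v^* - T_{\pi^*}^l v_0} \leq \gamma^l \norm{v^* - v_0}$; thus it suffices to show that, with probability $1-\delta$, the iterate $v_n$ dominates $T_{\pi^*}^l v_0$ while remaining below $v^*$.

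First I would reuse the per-epoch inequality from \cref{thm:conv}: if during an interval $[n_k, n_{k+1}]$ every state $s$ is updated at least once with an action set containing $\pi^*(s)$, then monotonicity of $(v_n)$ and of $T_{\pi^*}$ give $v_{n_{k+1}} \geq T_{\pi^*} v_{n_k}$, and iterating $l$ such ``epochs'' yields $v_{n_l} \geq T_{\pi^*}^l v_0$. For the upper side, each of the initializations in \cref{asumpt:init} satisfies $v_0 \leq v^*$, so monotonicity of the Bellman optimality operator gives $v_n \leq T^n v_0 \leq v^*$. Combining, $0 \leq v^* - v_n \leq v^* - T_{\pi^*}^l v_0$ once $l$ epochs have completed, which immediately delivers $\norm{v^* - v_n} \leq \gamma^l \norm{v^* - v_0}$.

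The only randomness is in bounding the length of each epoch, which is the heart of the argument. For a fixed state $s$, a single iteration samples the pair $(s, \pi^*(s))$ with probability $\q(s, \pi^*(s)) \geq q_{min}$, so it fails to do so over $N$ consecutive iterations with probability at most $(1-q_{min})^N$; a union bound over the $S$ states shows an epoch of length $N$ fails to cover all optimal pairs with probability at most $S(1-q_{min})^N$. Setting this to $\delta/l$ and solving gives precisely $N = \lceil \ln(Sl/\delta) / \ln(1/(1-q_{min})) \rceil$, the bracketed term in \cref{eq:n}. A second union bound over $l$ disjoint blocks of $N$ iterations each then guarantees all $l$ epochs succeed with probability at least $1 - l \cdot (\delta/l) = 1-\delta$, so any $n \geq lN$ suffices.

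I expect the most delicate step to be justifying that these estimates are valid despite the states being updated at different random times within each epoch and despite the sampling distribution $q$ being potentially state-conditional. The resolution on the value side is exactly the monotonicity chain above, which lower-bounds the update at the (random) time $t(s,k)$ by $T_{\pi^*} v_{n_k}(s)$ no matter when inside the window it occurs. On the probability side, the bound $\q(s,\pi^*(s)) \geq q_{min}$ holds for every state regardless of the sampling history, so the $(1-q_{min})^N$ per-state tail and the subsequent union bounds go through without any independence assumption across epochs. Finally, the ``near-geometric'' qualifier reflects that DAVI pays the logarithmic factor $N$ per contraction, in contrast to the single sweep per contraction of synchronous VI.
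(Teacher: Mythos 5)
Your proposal is correct and follows essentially the same route as the paper: the same epoch decomposition via the per-epoch inequality $v_{n_{k+1}} \geq T_{\pi^*} v_{n_k}$, the same per-state tail bound $(1-\q(s,\pi^*(s)))^N \leq (1-q_{min})^N$, and the same union bound over $S$ states and $l$ intervals yielding $Sl(1-q_{min})^N \leq \delta$. The only (immaterial) difference is that you accumulate $v_n \geq T_{\pi^*}^l v_0$ and invoke the $l$-step contraction of \cref{lemma:fixpoint}(1) once, whereas the paper contracts the norm by $\gamma$ once per epoch.
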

\begin{proof}
Recall from \cref{lemma:monotone}, we have shown $v_n \to v^*$ monotonically from below.  From \cref{thm:conv}, we have also defined
$(n_{k})_{k=0}^{\infty}$ to be a sequence of increasing indices, where $n_0 = 0$, such that  between the $n_k$-th and $n_{k+1}$-th iteration, all state $s \in \cS$ have been updated at least once with an action set containing $\pi^*(s)$.  At the $n_{k+1}$-th iteration, $v_{n_{k+1}} \geq T_{\pi^*}v_{n_k}$.  This implies that at the $n_{k+1}$-th iteration, DAVI would have $\gamma$-contracted at least once:
   \begin{align}
0 \leq v^* - v_{n_{k+1}}  &\leq v^* - T_{\pi^*}v_{n_k}, \implies 
\norm{v^* - v_{n_{k+1}}} \leq \norm{v^* - T_{\pi^*}v_{n_k}},    \\
\norm{v^* - T_{\pi^*}v_{n_k}} &= \norm{T_{\pi^*}v^* - T_{\pi^*}v_{n_k})} \leq \gamma \norm{v^* - v_{n_k}} \\
\implies \norm{v^* - v_{n_{k+1}}} &\leq \gamma \norm{v^* - v_{n_k}}. 
\end{align}

Consider dividing $n \in\mathbb{N}$ iterations into uniform intervals of length $N$ such that the $i$-th interval is $(iN, (i+1)N-1)$. Let $\event_i(s)$ denote the event that at some iteration in the $i$-th interval, state $s$ has been updated with an action set containing $\pi^*(s)$.  Therefore, an occurrence of event $\event_i \doteq \cap_{s \in \cS} \event_i(s)$ would mean that at $(i+1)N$-th iteration, $v_{(i+1)N}$ would have contracted at least once. Then, on the event $\event \doteq \cap_{i} \event_i = \cap_i \cap_{s \in \cS} \event_i(s)$, there have been at least $l$ $\gamma$-contraction after $n$ iterations.

We would like $\Prob(\event) \geq 1- \delta$ or alternatively the probability of failure event $\Prob(\event^c) \leq \delta$, for some $\delta > 0$.   However, just how large should $N$ be in order to maintain a failure probability of $\delta$? To answer this question, we first bound $\Prob(\event^c)$ using union bound:
\begin{align}
    \Prob(\event^c) = \Prob(\cup_{i} \cup_{s \in \cS} \event_i^c(s)) \leq \sum_{i = 1}^{l} \sum_{s \in \cS} \Prob(\event_i^c).\label{eq:37}
\end{align}
From \cref{def:1}, $\q(s, \pi^*(s))$ is the joint probability that a single state $s$ is sampled for update with $\pi^*(s)$ included in the action subset sampled by $q$.  Then, the probability that state $s$ is not updated with an action set containing $\pi^*(s)$ in $N$ iterations is $(1 - \q(s, \pi^*(s)))^N$.  Continuing from \cref{eq:37}, 
\begin{align}
    \Prob(\event^c) &\leq \sum_{i=1}^l \sum_{s \in \cS} (1 - \q(s, \pi^*(s)))^N  \leq Sl(1-q_{min})^N, 
 \end{align}
where $q_{min} \doteq \min_{s,a} \q(s, a)$.  Now set $Sl (1-q_{min})^N \leq \delta$ and solve for $N$,
 \begin{align}
     N &\geq  \ln \left( \frac{\delta}{Sl} \right) \Big / \ln \left( 1 -q_{min}\right).
 \end{align}
Thus, with probability at least $1 - \delta$, within
 \begin{align}
     n = l \left\lceil \ln \left( \frac{Sl}{\delta} \right) \Big / \ln \left( \frac{1}{1 - q_{min}}\right) \right\rceil 
 \end{align}
 iterations DAVI will have $\gamma$-contracted at least $l$ times. 
\end{proof}

\bf{Remark: } \rm We note that $p,q$ could be non-stationary and potentially chosen adaptively based on current value estimates, which is an interesting direction for future work.

\begin{corollary}[Computational complexity of obtaining an $\epsilon$-optimal policy] \label{cor:complexity}
 Fix an $\epsilon \in (0, \norm{v^* - v_0})$, and assume DAVI initialises according to (i), (ii), or (iii) of \cref{asumpt:init}.  Define 
 \begin{align}
     H_{\gamma, \epsilon} \doteq \ln \left(\frac{\norm{v^* - v_0}}{\epsilon}\right)/{1-\gamma}
 \end{align}
 as a horizon term.  Then, DAVI runs for at least
 \begin{align}
     \tau = H_{\gamma, \epsilon} \left(\ln\left(\frac{S H_{\gamma, \epsilon}}{\delta} \right)/\ln\left( \frac{1}{1-q_{min}}\right)\right)
 \end{align}
 iterations, returns an $\epsilon$-optimal policy $\pi_n: v_{\pi_n} \geq v^* - \epsilon \1$ with probability at least $1-\delta$ using $O\left(mS \tau \right)$ elementary arithmetic and logical operations, where $m$ is the size of the action subset and $S$ is the size of the state space.  Note that $\norm{v^* - v_0}$ is unknown but it can be upper bounded by $\frac{1}{1-\gamma} + \norm{v_0}$ given rewards are in $[0,1]$.
 \end{corollary}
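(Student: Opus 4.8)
The plan is to reduce the claimed \emph{policy} guarantee to the \emph{value}-error guarantee already proved in \cref{thm:conv_rate}, then choose the contraction count $l$ so that the value error drops below $\epsilon$, and finally tally the per-iteration arithmetic cost. The skeleton is: (a) show that the maintained best-so-far policy is never worse than the current value estimate, i.e. $v_n \le v_{\pi_n} \le v^*$, which gives $\norm{v^* - v_{\pi_n}} \le \norm{v^* - v_n}$; (b) invoke \cref{thm:conv_rate} to drive $\norm{v^* - v_n}$ below $\epsilon$; (c) convert the iteration count into elementary operations.

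For step (a) the key is the invariant $T_{\pi_n} v_n \ge v_n$ (equivalently $L^{v_n}(s,\pi_n(s)) \ge v_n(s)$ for all $s$), which I would prove by induction on $n$. The base case is exactly initialization (iii) of \cref{asumpt:init}; I would first note that (i) and (ii) each imply (iii) using $r(s,a)\ge 0$ (for (i), $L^{v_0}(s,\pi_0(s)) = r(s,\pi_0(s)) \ge 0 = v_0(s)$; for (ii), $L^{v_0}(s,\pi_0(s)) = r(s,\pi_0(s)) - \gamma c \ge -c = v_0(s)$). For the inductive step I would use that the back-up yields $v_{n+1}(s_n) = L^{v_n}(s_n,\pi_{n+1}(s_n))$ in both branches of the policy-improvement rule, that $\pi_{n+1}$ and $\pi_n$ agree off $s_n$, and that monotonicity of the iterates (\cref{lemma:monotone}) together with monotonicity of $v \mapsto L^v(s,a)$ gives $L^{v_{n+1}}(s,\pi_{n+1}(s)) \ge L^{v_n}(s,\pi_{n+1}(s))$. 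Once $T_{\pi_n} v_n \ge v_n$ holds, monotonicity of $T_{\pi_n}$ gives $v_{\pi_n} = \lim_k T_{\pi_n}^k v_n \ge v_n$; combined with $v_{\pi_n} \le v^*$ this yields $0 \le v^* - v_{\pi_n} \le v^* - v_n$ componentwise, hence $\norm{v^* - v_{\pi_n}} \le \norm{v^* - v_n}$.

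For step (b) I would require $\gamma^l \norm{v^* - v_0} \le \epsilon$. Solving gives $l \ge \ln(\norm{v^*-v_0}/\epsilon)/\ln(1/\gamma)$, and since $\ln(1/\gamma) \ge 1-\gamma$ for $\gamma \in (0,1)$, it suffices to take $l = H_{\gamma,\epsilon} = \ln(\norm{v^*-v_0}/\epsilon)/(1-\gamma)$ (positive because $\epsilon < \norm{v^*-v_0}$). Substituting this $l$ into \cref{eq:n} of \cref{thm:conv_rate} reproduces exactly $\tau$ up to the ceiling, so after $\tau$ iterations $\norm{v^* - v_n} \le \epsilon$ with probability $1-\delta$, and by step (a) $v_{\pi_n} \ge v^* - \epsilon \1$. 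The stated bound $\norm{v^*-v_0} \le \tfrac{1}{1-\gamma} + \norm{v_0}$ follows from $\norm{v^*} \le \tfrac{1}{1-\gamma}$ (rewards in $[0,1]$) and the triangle inequality.

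For step (c), each iteration evaluates $L^{v_n}(s_n,\cdot)$ for the $m$ sampled actions plus the best-so-far action; each look-ahead value is a sum over $\cS$ costing $O(S)$, the argmax costs $O(m)$, and the value/policy updates are $O(1)$, so one iteration costs $O(mS)$ and $\tau$ iterations cost $O(mS\tau)$. The main obstacle is step (a): \cref{thm:conv_rate} controls only the value iterate $v_n$, whereas the corollary asserts a guarantee on the \emph{returned policy} $\pi_n$, so the crux is proving that the maintained best-so-far policy certifies a lower bound on the optimal value, namely $v_{\pi_n} \ge v_n$. The remaining pieces are routine logarithm manipulation and operation counting.
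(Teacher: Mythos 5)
Your proposal is correct and follows essentially the same route as the paper's proof: establish the invariant $L^{v_n}(s,\pi_n(s)) \ge v_n(s)$ by induction (base case from the initialization assumptions, inductive step from monotonicity of the iterates and of $v \mapsto L^v(s,a)$), iterate $T_{\pi_n}$ to conclude $v_{\pi_n} \ge v_n$ and hence $\norm{v^* - v_{\pi_n}} \le \norm{v^* - v_n}$, then plug $l = H_{\gamma,\epsilon}$ (via $\ln(1/\gamma) \ge 1-\gamma$) into \cref{thm:conv_rate} and multiply by the $O(mS)$ per-iteration cost. No gaps.
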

 \begin{proof}
See \cref{appendix:davi}.
\end{proof}

\bf{Remark: } \rm As a straightforward consequence of \cref{thm:conv} and \cref{cor:complexity}, we show in \cref{appendix:davi} (Corollary \cref{appendix:opt_policy_complexity}) that DAVI returns an optimal policy $\pi^*$ with probability $1-\delta$ within a number of computations that depends on the minimal value gap between the optimal action and the second-best action with respect to $v^*$.

We can compare the computational complexity bound for DAVI (the result of \cref{cor:complexity}) to similar bounds for Asynchronous VI and VI.  As far as we know, computational complexity bounds for Asynchronous VI have not been reported in the literature.  We followed similar argument to \cref{thm:conv_rate} and \cref{cor:complexity} to obtain the computational complexity bound for Asynchronous VI in \cref{appendix:avi}. 
\begin{table}
    \caption{Computational complexity of VI, Asynchronous VI, DAVI} \label{tab:comp_complexity}
    \centering
    \begin{tabular}{l l l}
        \toprule
        \cmidrule(r){1-2}
        Algorithms & Computational complexity & References\\
        \midrule
         VI & $O\left(A S^2 H_{\gamma, \frac{\epsilon(1-\gamma)}{2\gamma}}\right)$ & \cite{vi_bound} \\
         Asynchronous VI & $O\left(AS H_{\gamma, \epsilon} \frac{\ln\left( \frac{S H_{\gamma, \epsilon}}{\delta} \right)}{\ln\left( \frac{1}{1-p_{min}}\right)}\right)$ & This paper \\
         DAVI & $O\left(mS H_{\gamma, \epsilon} \frac{\ln\left( \frac{S H_{\gamma, \epsilon}}{\delta} \right)}{\ln\left(\frac{1}{1-q_{min}}\right)}\right)$ & This paper \\
         \bottomrule
    \end{tabular}
\end{table}

Recall that $q_{min} \doteq \min_{s,a} \q(s,a)$. Consider the case of uniform sampling of states and actions. Uniform sampling of the states results in a probability of $\frac{1}{S}$ of sampling a particular state, while uniform sampling of $m$ actions without replacement results in a probability of $\frac{m}{A}$ of including a particular action in the subset.  Altogether $q_{min} = \frac{m}{SA}$.  Uniform sampling of state and action subset is the best sampling strategy for the bound $O\left(mS H_{\gamma, \epsilon} \left(\ln\left( \frac{S H_{\gamma, \epsilon}}{\delta} \right)/\ln\left( \frac{1}{1-q_{min}}\right)\right)\right)$ because any non-uniform strategy would result in $\q_{\min} < \frac{m}{SA}$.  Suppose $\q_{\min} = \frac{m}{SA}$, then
$ -m/\left(\ln(1- \frac{m}{SA})\right) \approx m/\left(\frac{m}{SA}\right) = SA$. Therefore, DAVI's computational complexity $O\left(mS H_{\gamma, \epsilon} \left(\ln\left( \frac{S H_{\gamma, \epsilon}}{\delta} \right)/\ln\left( \frac{1}{1-q_{min}}\right) \right)\right) \approx O \left(S^2 A H_{\gamma, \epsilon} \ln\left( \frac{S H_{\gamma, \epsilon}}{\delta} \right)\right)$.
Likewise, uniform sampling of state will result in $p_{\min} = \frac{1}{S}$, and so it follows that $-1/\left(\ln \left( 1-\frac{1}{S}\right) \right) \approx S$.  Then, Asynchronous VI's computational complexity $O\left(AS H_{\gamma, \epsilon} \left(\ln\left( \frac{S H_{\gamma, \epsilon}}{\delta} \right)/\ln\left( \frac{1}{1-p_{min}}\right) \right) \right) \approx O \left(S^2 A H_{\gamma, \epsilon} \ln\left( \frac{S H_{\gamma, \epsilon}}{\delta} \right)\right)$.  

\cite{chenwang2017} have established a lower bound on the computational complexity of planning in finite discounted MDP to be $\Omega(S^2 A)$.  The importance of their result shows that no algorithm can escape this $S^2 A$ computational complexity.  Both DAVI and Asynchronous VI computational complexity matches that of the lower-bound $\Omega(S^2 A)$ up to log terms in $S^2 A$ but have additional dependence on $H_{\gamma, \epsilon}$ and $\ln(1/\delta)$. 

VI, Asynchronous VI, and DAVI all include a horizon term. The horizon term $H_{\gamma, \epsilon}$ improves upon the horizon term of $H_{\gamma, \frac{\epsilon(1-\gamma)}{2 \gamma}} = \ln\left( \frac{2 \gamma \norm{v^* - v_0}}{\epsilon(1-\gamma)}\right) / (1- \gamma)$ that appears in the VI bound of \cite{vi_bound} when $\gamma > 0.5$.  As VI does not require sampling, it has no failure probability.  Thus, DAVI and Asynchronous VI both have an additional $\ln\left (\frac{1}{\delta} \right)$. We leave open the question of whether the additional log term $\ln(SH_{\gamma, \epsilon})$ in DAVI and Asynchronous VI is necessary.  

The computational complexity of DAVI nearly matches that of VI, but DAVI does not need to sweep through the action space in every state update.  Similar to Asynchronous VI, DAVI also does not need to wait for all states to complete their update in batches, as is the case of VI, making DAVI a more practical algorithm.

\section{Experiments} \label{sec:exp}
DAVI relies on sampling to reduce the computation of each update, and the performance of DAVI can be affected by the sparsity of rewards.  If a problem is like a needle in a haystack, where only one specific sequence of actions leads to a reward, then we do not anticipate uniform sampling to be beneficial in terms of total computation.  An algorithm would still have to consider most states and actions to make progress in this case.  On the other hand, we hypothesize that DAVI would converge faster than Asynchronous VI in domains with multiple optimal or near-optimal policies.  To isolate the effect of reward sparsity from the MDP structure, we first test our hypothesis on several single-state MDP domains.  However, solving a multi-state MDP is generally more challenging than solving a single-state MDP.  In our second experiment, we examine the performance of DAVI on two sets of MDPs: an MDP with a tree structure and a random MPD.  

The algorithms that will be compared in the experiments are VI, Asynchronous VI, and DAVI.  We implement Asynchronous VI and DAVI using uniform sampling to obtain the states.  DAVI samples a new set of actions via uniform sampling without replacement in each iteration.  

\subsection{Single-state experiment}
This experiment consists of a single-state MDP with $10000$ actions, all terminate immediately.  We experiment on two domains: needle-in-the-haystack and multi-reward.  Needle-in-the-haystack has one random action selected to have a reward of 1, with all other rewards set to 0.  Multi-reward has 10 random actions with a reward of 1.  The problems in this single-state experiment amount to brute-force search for the actions with the largest reward.

\subsection{Multi-state experiment}
This experiment consists of two sets of MDPs.  The first set consists of a tree with a depth of 2.  Each state has 50 actions, where each action leads to 2 other distinct next states.  All actions terminate at the leaf states.  Rewards are 0 everywhere except at a random leaf state-action pair, where reward is set to 1.  With this construct, there are around 10000 states.  The second set consists of a random MDP with 100 states, where each state has 1000 actions.  Each action leads to 10 next states randomly selected from the 100 states with equal probability.  All transitions have a 0.1 probability of terminating.  A single state-action pair is randomly chosen to have a reward of 1.  The $\gamma$ in all of the MDPs are 1.

\subsection{Discussion}
\Cref{fig:single} and \Cref{fig:multi} show the performance of the algorithms.  All graphs included error bars showing the standard error of the mean.  Notice that all graphs started at 0 and eventually reached an asymptote unique to each problem setting.  All graphs smoothly increased towards the asymptote except for Asynchronous VI in \Cref{fig:single} and VI in \Cref{fig:multi}, whose performances were step-functions \footnote{Asychronous VI in the single-state experiment is equivalent to VI since there is only one state.}.  The y-axis of each graph showed a state value averaged over 200 runs.  The x-axes showed run-times, which have been adjusted for computations.  

In \Cref{fig:single}(a,b), DAVI with $m = 1$ was significantly different from that of DAVI with $m=10,100, 1000$, and DAVI with $m=10,100,1000$ converged at a similar rate. While in \Cref{fig:multi}(a,b), all algorithms were significantly different.  DAVI $m = 10$ in random MDP \Cref{fig:multi}(b) converged faster than any other algorithms.  These results suggest that an ideal $m$ exists for each domain.
\begin{figure}[!ht]
    \centering
    \includegraphics[keepaspectratio=true,scale=0.43]{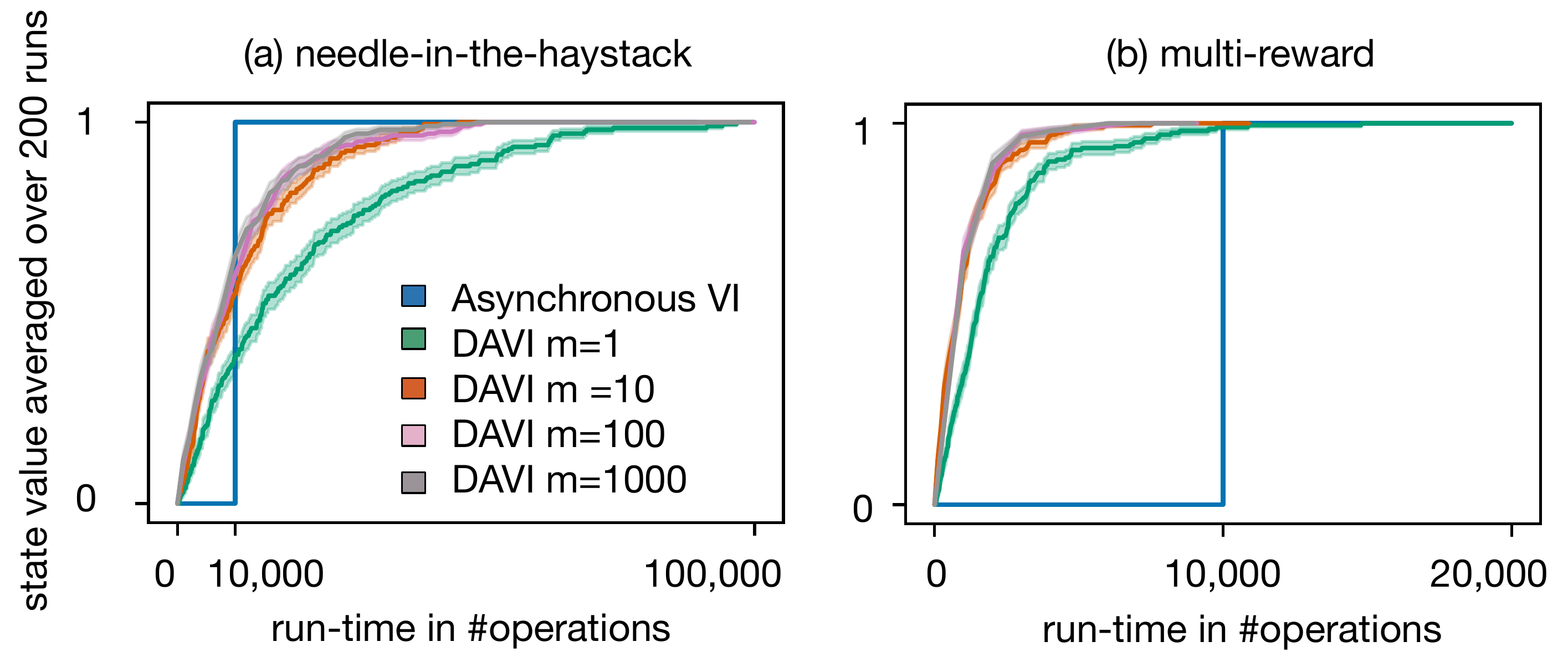}
    \caption{Single-state experiment with $10,000$ actions: (a) has only one action with a reward of 1 (b) has 10 actions with a reward of 1.  The Asynchronous VI in this experiment is equivalent to VI since there is only one state. We run each instance 200 times with a new MDP generated each time. In each run, each algorithm is initialized to 0.}
    \label{fig:single}
\end{figure}

In the needle-in-the-haystack setting \Cref{fig:single}(a), all four DAVI algorithms made some progress by the 10000 computation mark while Asynchronous VI stayed flat. Once Asynchronous VI finished computing the look-ahead value for all of the actions, it reached the asymptote immediately.  On the other hand, DAVI might have been lucky in some of the runs, found the optimal action amongst the subset early on, logged it as a best-so-far action, and had a state value sustained at 1 thereafter.  However, there could also be runs where sampling had the opposite effect.  

Changing the reward structure by introducing a few redundant optimal actions into the action space increased the probability that an optimal action was included in the subset.  In the multi-reward setting \cref{fig:single}(b), DAVI with all settings of $m$ have essentially reached the asymptote by the 10000 mark.  DAVI with all four action subset sizes reached the asymptote faster than Asynchronous VI.  As expected, DAVI converged faster than Asynchronous VI in the case of multiple rewarding actions. 

In the \Cref{fig:multi}(a), we saw a similar performance to that of the needle-in-the-haystack in the single-state experiment \Cref{fig:single}(a).  When we changed the MDP structure to allow for multiple possible paths that led to the special state with the hidden reward, as evident in \Cref{fig:multi}(b), DAVI with all settings of $m$ all reached the asymptote faster than Asynchronous VI and VI.  As expected, DAVI converged faster than Asynchronous VI in the case of multiple near-optimal policies. 
\begin{figure}[!ht]
    \centering
    \includegraphics[keepaspectratio=true,scale=0.4]{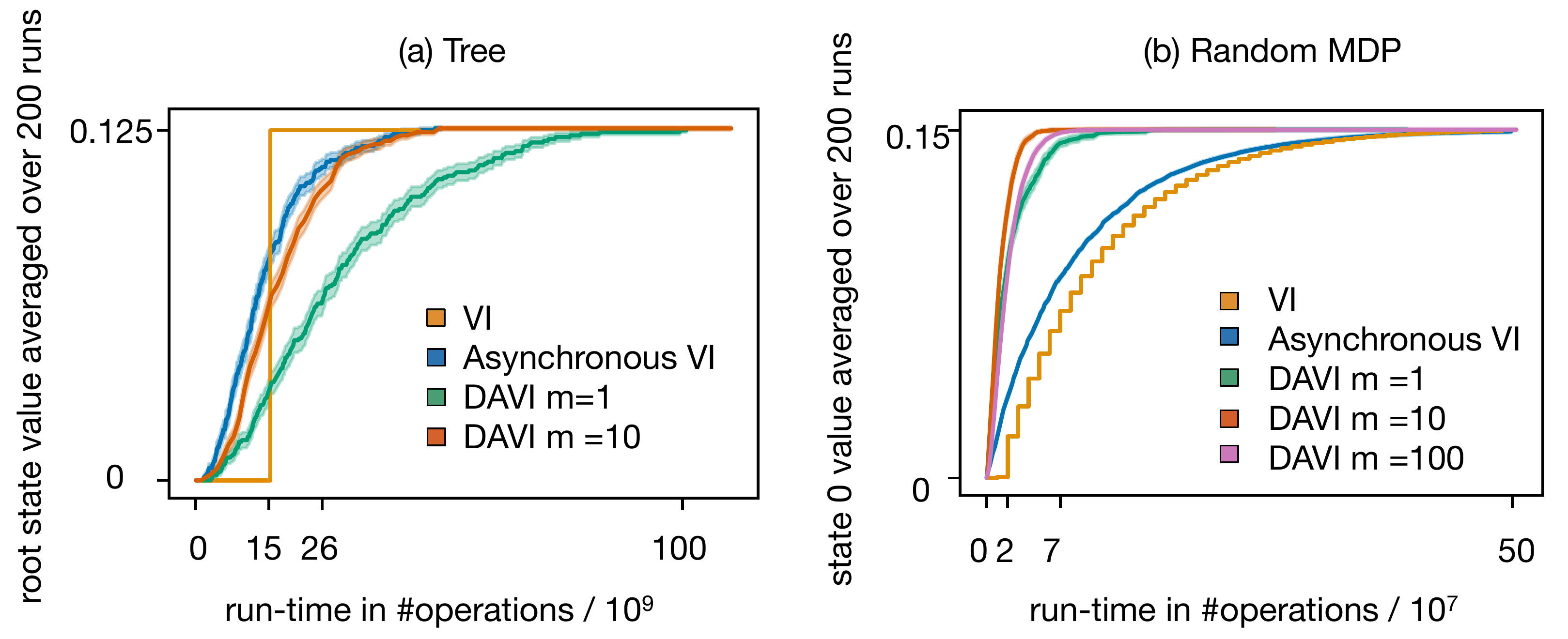}
    \caption{Multi-state-action needle-in-the-haystack experiment: (a) MDP with a tree structure (b) random MDP.  We run each instance 200 times with a new MDP generated each time. In each run, all algorithms are initialized to 0. }
    \label{fig:multi}
\end{figure}

We note that perhaps many other real-world problems resemble a setting like random-MDP more than needle-in-the-haystack, and hence the result on random-MDP may be more important.  See \cref{appendix:exp} for additions experiments with rewards drawn from Normal and Pareto distributions.   

\section{Conclusion}
The advantage of running asynchronous algorithms on domains with large state and action space have been made apparent in our studies.  Asynchronous VI helps to address the large state space problem by making in-place updates, but it is still intractable in domains with large action space.  DAVI is asynchronous in the state updates as in Asynchronous VI, but also asynchronous in the maximization over the actions.  We show DAVI converges to the optimal value function with probability 1 and at a near-geometric rate with a probability of at least $1 - \delta$.  Asynchronous VI also achieves a computational complexity closely matching that of VI.  We give empirical evidence for DAVI's computational efficiency in several experiments with multiple reward settings. 

Note that DAVI does not address the summation over the states: $\sum_{s' \in \cS} p(s' | s,a) v(s')$ in the computation of the look-ahead values.  If the state space is large, computing such a sum can also be prohibitively expensive.  Prior works by \cite{harm} use ``small back-ups'' to address this problem.  Instead of a summation of all successor states, they update each state's value with respect to one successor state in each update.  Another possibility is to sample a subset of successor states to compute the look-ahead values at the cost of additional failure probability.  Combining these techniques with DAVI is a potential direction for future work.

\begin{ack}
We thank Tadashi Kozuno, Csaba Szepesvári, and Roshan Shariff for their valuable comments and feedback. The authors gratefully acknowledge funding from DeepMind, Amii, NSERC, and CIFAR.
\end{ack}

\bibliographystyle{apalike}
\bibliography{ref}

\begin{thebibliography}{}

\bibitem[Ahmad et~al., 2020]{zaheen}
Ahmad, Z., Lelis, L., and Bowling, M. (2020).
\newblock Marginal utility for planning in continuous or large discrete action
  spaces.
\newblock In {\em Advances in Neural Information Processing Systems}.

\bibitem[Bensoussan, 2011]{bensoussan2011dynamic}
Bensoussan, A. (2011).
\newblock {\em Dynamic Programming and Inventory Control}, volume~3.
\newblock IOS Press.

\bibitem[Bertsekas and Tsitsiklis, 1996]{neuro}
Bertsekas, D.~P. and Tsitsiklis, J.~N. (1996).
\newblock {\em Neuro-Dynamic Programming}.
\newblock Athena Scientific.

\bibitem[Chen and Wang, 2017]{chenwang2017}
Chen, Y. and Wang, M. (2017).
\newblock Lower bound on the computational complexity of discounted {Markov}
  decision problems.
\newblock {\em arXiv preprint arXiv:1705.07312}.

\bibitem[Danihelka et~al., 2022]{danihelka2022policy}
Danihelka, I., Guez, A., Schrittwieser, J., and Silver, D. (2022).
\newblock Policy improvement by planning with {Gumbel}.
\newblock In {\em Proceedings of the International Conference on Learning
  Representations}.

\bibitem[Hubert et~al., 2021]{Hubert2021LearningAP}
Hubert, T., Schrittwieser, J., Antonoglou, I., Barekatain, M., Schmitt, S., and
  Silver, D. (2021).
\newblock Learning and planning in complex action spaces.
\newblock In {\em Proceedings of the International Conference on Machine
  Learning}.

\bibitem[Kearns et~al., 2002]{kearns2002sparse}
Kearns, M., Mansour, Y., and Ng, A.~Y. (2002).
\newblock A sparse sampling algorithm for near-optimal planning in large
  {Markov} decision processes.
\newblock {\em Machine learning}, 49(2):193--208.

\bibitem[Levorato et~al., 2012]{wireless1}
Levorato, M., Narang, S., Mitra, U., and Ortega, A. (2012).
\newblock Reduced dimension policy iteration for wireless network control via
  multiscale analysis.
\newblock In {\em 2012 IEEE Global Communications Conference (GLOBECOM)}.

\bibitem[Littman et~al., 1995]{vi_bound}
Littman, M.~L., Dean, T.~L., and Kaelbling, L.~P. (1995).
\newblock On the complexity of solving {Markov} decision problems.
\newblock In {\em Proceedings of the Eleventh Conference on Uncertainty in
  Artificial Intelligence}.

\bibitem[Liu et~al., 2017]{wireless2}
Liu, L., Chattopadhyay, A., and Mitra, U. (2017).
\newblock On exploiting spectral properties for solving {MDP} with large state
  space.
\newblock In {\em 55th Annual Allerton Conference on Communication, Control,
  and Computing}.

\bibitem[Liu et~al., 2019]{wireless3}
Liu, L., Chattopadhyay, A., and Mitra, U. (2019).
\newblock On solving {MDPs} with large state space: Exploitation of policy
  structures and spectral properties.
\newblock {\em IEEE Transactions on Communications}, 67(6):4151--4165.

\bibitem[Powell et~al., 2002]{powell2002adaptive}
Powell, W.~B., Shapiro, J.~A., and Sim{\~a}o, H.~P. (2002).
\newblock An adaptive dynamic programming algorithm for the heterogeneous
  resource allocation problem.
\newblock {\em Transportation Science}, 36(2):231--249.

\bibitem[Puterman, 1994]{mdp}
Puterman, M.~L. (1994).
\newblock {\em Markov Decision Processes: Discrete Stochastic Dynamic
  Programming}.
\newblock John Wiley \& Sons, Inc.

\bibitem[Szepesv{\'a}ri, 2010]{szepesvari2010algorithms}
Szepesv{\'a}ri, C. (2010).
\newblock {\em Algorithms for Reinforcement Learning}.
\newblock Morgan \& Claypool Publishers.

\bibitem[Szepesv{\'a}ri and Littman, 1996]{szepesvari1996generalized}
Szepesv{\'a}ri, C. and Littman, M.~L. (1996).
\newblock Generalized {Markov} decision processes: Dynamic-programming and
  reinforcement-learning algorithms.
\newblock Technical report, CS-96-11, Brown University, Providence, RI.

\bibitem[Van~Seijen and Sutton, 2013]{harm}
Van~Seijen, H. and Sutton, R. (2013).
\newblock Planning by prioritized sweeping with small backups.
\newblock In {\em Proceedings of the International Conference on Machine
  Learning}.

\bibitem[Williams and Baird~III, 1993]{Williams93analysisof}
Williams, R.~J. and Baird~III, L.~C. (1993).
\newblock Analysis of some incremental variants of policy iteration: First
  steps toward understanding actor-critic learning systems.
\newblock Technical report, NU-CCS-93-11, Northeastern University, College of
  Computer Science, Boston, MA.

\bibitem[Zeng et~al., 2020]{zeng2020asyncqvi}
Zeng, Y., Feng, F., and Yin, W. (2020).
\newblock Async{QVI}: Asynchronous-parallel {Q}-value iteration for discounted
  {Markov} decision processes with near-optimal sample complexity.
\newblock In {\em Proceedings of the Twenty Third International Conference on
  Artificial Intelligence and Statistics}.

\end{thebibliography}

\section*{Checklist}

The checklist follows the references.  Please
read the checklist guidelines carefully for information on how to answer these
questions.  For each question, change the default \answerTODO{} to \answerYes{},
\answerNo{}, or \answerNA{}.  You are strongly encouraged to include a {\bf
justification to your answer}, either by referencing the appropriate section of
your paper or providing a brief inline description.  For example:
\begin{itemize}
  \item Did you include the license to the code and datasets? \answerYes{}
  \item Did you include the license to the code and datasets? \answerNo{The code and the data are proprietary.}
  \item Did you include the license to the code and datasets? \answerNA{}
\end{itemize}
Please do not modify the questions and only use the provided macros for your
answers.  Note that the Checklist section does not count towards the page
limit.  In your paper, please delete this instructions block and only keep the
Checklist section heading above along with the questions/answers below.

\begin{enumerate}

\item For all authors...
\begin{enumerate}
  \item Do the main claims made in the abstract and introduction accurately reflect the paper's contributions and scope?
    \answerYes{}
  \item Did you describe the limitations of your work?
    \answerYes{}
  \item Did you discuss any potential negative societal impacts of your work?
    \answerNA{}
  \item Have you read the ethics review guidelines and ensured that your paper conforms to them?
    \answerYes{}
\end{enumerate}

\item If you are including theoretical results...
\begin{enumerate}
  \item Did you state the full set of assumptions of all theoretical results?
    \answerYes{}
        \item Did you include complete proofs of all theoretical results?
    \answerYes{}
\end{enumerate}

\item If you ran experiments...
\begin{enumerate}
  \item Did you include the code, data, and instructions needed to reproduce the main experimental results (either in the supplemental material or as a URL)?
    \answerNo{}
  \item Did you specify all the training details (e.g., data splits, hyperparameters, how they were chosen)?
    \answerNA{}
        \item Did you report error bars (e.g., with respect to the random seed after running experiments multiple times)?
    \answerYes{}
        \item Did you include the total amount of compute and the type of resources used (e.g., type of GPUs, internal cluster, or cloud provider)?
    \answerNo{}
\end{enumerate}

\item If you are using existing assets (e.g., code, data, models) or curating/releasing new assets...
\begin{enumerate}
  \item If your work uses existing assets, did you cite the creators?
    \answerNA{}
  \item Did you mention the license of the assets?
    \answerNA{}
  \item Did you include any new assets either in the supplemental material or as a URL?
    \answerNA{}
  \item Did you discuss whether and how consent was obtained from people whose data you're using/curating?
    \answerNA{}
  \item Did you discuss whether the data you are using/curating contains personally identifiable information or offensive content?
    \answerNA{}
\end{enumerate}

\item If you used crowdsourcing or conducted research with human subjects...
\begin{enumerate}
  \item Did you include the full text of instructions given to participants and screenshots, if applicable?
    \answerNA{}
  \item Did you describe any potential participant risks, with links to Institutional Review Board (IRB) approvals, if applicable?
    \answerNA{}
  \item Did you include the estimated hourly wage paid to participants and the total amount spent on participant compensation?
    \answerNA{}
\end{enumerate}

\end{enumerate}


\appendix
\section*{Supplementary material}
\section{Auxilary proofs for DAVI's theoretical results}\label{appendix:davi}
This section shows the proof of the supporting lemmas required in the proof of DAVI's convergence and convergence rate.    We also include here a more general proof of the convergence of DAVI and each of the corollaries.    The numbering of each lemma, corollary, and theorem corresponds to the main paper's numbering.

\begin{definition} \label{def:appendex_davi_op}
    Recall $T_{n}: \R^S \to \R^S$. For a given $\A_n \sim \q$, $\pi_n \in \Pi$, $s_n \in \cS$, and for all $s \in \cS$ and $v \in \R^S$, 
\begin{align}
T_{n}v(s) \doteq
\begin{cases}
    \max_{a \in \A_n \cup \pi_n(s)} L^v(s,a) & \text{if } s = s_n \\
    v(s) & \text{otherwise}. \label{eq:davi_operator}
\end{cases}
\end{align}
Define $T_{\pi, s_n}: \R^S \to \R^S$. For a given $\pi \in \Pi$, $s_n \in \cS$, and for all $s \in \cS$ and $v \in \R^S$,
\begin{align}
     T_{\pi, s_n}v(s) &\doteq \begin{cases}
        L^{v}(s, \pi(s)) & \text{if } s = s_n \\
    v(s) & \text{otherwise}.
    \end{cases} \label{eq:davi_t_pi}
\end{align}
Then, the value iterates of DAVI evolves according to $v_{n+1} = T_{n} v_n$ for all $n \in \mathbb{N}_0$.  Alternatively, $v_{n+1} = T_{\pi_{n+1}, s_n} v_n$ with $\pi_{n+1}(s)$ being the the action that satisfies $\max_{a \in \A_n \cup \pi_n(s)} L^{v_n}(s,a)$ for $s = s_n$ and $\pi_{n+1}(s) = \pi_n(s)$ for $s \neq s_n$. $(i.e., T_{\pi_{n+1}, s_n}v_n = T_n v_n)$. 
\end{definition}

\begin{definition}[Optimality capture region \citep{Williams93analysisof}] \label{def:gap_opt_cap}
Define 
 \begin{align}
    \Delta^{v}(s) = \min\left[ \left\{  \max_{a' \in \A} L^{v}(s, a') - L^{v}(s, a) \Big | a \in \A \right \} - \{ 0 \} \right] \label{eq:gap}
\end{align}    
as the difference between the look-ahead value with respect to $v$ of the greedy action and a second-best action for state $s$.  Let $\Delta^{v^*} \doteq \min_{s \in \cS} \Delta^{v^*}(s)$.  Then, the optimality capture region is defined to be
\begin{align}
    \left \{ v: \norm{v^* - v} < \frac{\Delta^{v^*}}{2\gamma}, v \in \R^S \right \}.
\end{align}
\end{definition}

\begin{lemma}\label{appendix:monotone_op}
    DAVI operators $T_n$ and $T_{\pi, s'}$ are monotone operators.  That is given $v, u \in \R^S$ if $v \leq u$, then $T_n v \leq T_n u$ and $T_{\pi, s'} v \leq T_{\pi, s'} u$. 
\end{lemma}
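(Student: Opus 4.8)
The plan is to reduce both claims to a single elementary fact: the look-ahead map $v \mapsto L^{v}(s,a)$ is monotone for every fixed pair $(s,a)$. Both $T_n$ and $T_{\pi,s'}$ are assembled from this map using only the identity, case-splitting on whether $s$ equals the update state, and (for $T_n$) a maximum over a \emph{fixed} finite index set, and each of these operations preserves the pointwise order on $\R^S$.

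First I would establish the monotonicity of the look-ahead value. Fix $s,a$ and suppose $v \leq u$, meaning $v(s') \leq u(s')$ for all $s' \in \cS$. Since $\gamma \geq 0$ and every transition weight satisfies $p(s'|s,a) \geq 0$, the nonnegatively weighted sum preserves the inequality term by term, giving
\begin{align}
L^{v}(s,a) = r(s,a) + \gamma \sum_{s' \in \cS} p(s'|s,a)\, v(s') \leq r(s,a) + \gamma \sum_{s' \in \cS} p(s'|s,a)\, u(s') = L^{u}(s,a).
\end{align}
This is the only place where the structure of $L$ (nonnegative weights, nonnegative discount) is used.

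Next I would verify monotonicity of $T_n$ by the two cases in its definition. For $s \neq s_n$ both operators act as the identity, so $T_n v(s) = v(s) \leq u(s) = T_n u(s)$. For $s = s_n$, the maximum is taken over the common finite set $\A_n \cup \pi_n(s)$, which depends only on the fixed data $\A_n, \pi_n, s_n$ of the operator and not on the value function; applying the look-ahead inequality to each $a$ in this set and then using that the pointwise maximum over a common index set preserves order yields $\max_{a \in \A_n \cup \pi_n(s)} L^{v}(s,a) \leq \max_{a \in \A_n \cup \pi_n(s)} L^{u}(s,a)$, i.e. $T_n v(s) \leq T_n u(s)$. The argument for $T_{\pi,s'}$ is the same but simpler, since there is no maximization: for $s \neq s'$ it is the identity, and for $s = s'$ we have $T_{\pi,s'} v(s) = L^{v}(s,\pi(s)) \leq L^{u}(s,\pi(s)) = T_{\pi,s'} u(s)$.

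There is no real obstacle here; the result is essentially immediate from the nonnegativity of the transition probabilities. The one point meriting care is that the maximum defining $T_n$ ranges over the \emph{same} index set for both $v$ and $u$ (it is determined by the sampled $\A_n$ and $\pi_n$, not by the value function), which is precisely what makes the ``max preserves order'' step legitimate.
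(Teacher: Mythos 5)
Your proof is correct and follows essentially the same route as the paper's: both reduce the claim to the monotonicity of $v \mapsto L^v(s,a)$ via nonnegativity of $\gamma$ and the transition probabilities, then handle the two cases ($s = s_n$ versus $s \neq s_n$) directly. Your write-up is slightly more explicit in isolating the look-ahead inequality as a standalone step and in noting that the maximum ranges over the same index set for $v$ and $u$, but the substance is identical.
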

\begin{proof}
Given any $v, u \in \R^S$ s.t. $v \leq u$, then
\begin{align}
    T_n v(s) &= \begin{cases}
        \max_{a \in \A_n \cup \pi_n(s)} r(s,a) + \gamma \sum_{s'}p(s'|s,a)v(s') & \text{if } s = s_n \\
        v(s) & \text{otherwise}
    \end{cases}\\
    & \leq \begin{cases}
        \max_{a \in \A_n \cup \pi_n(s)} r(s,a) + \gamma \sum_{s'}p(s'|s,a)u(s') & \text{if } s = s_n \\
        u(s) & \text{otherwise}
    \end{cases} \\
    & = T_n u(s).
\end{align}

Given any $v,u \in \R^S$ s.t. $v \leq u$, then
\begin{align}
     T_{\pi, s_n}v(s) &\doteq \begin{cases}
        r(s, \pi(s)) + \gamma \sum_{s'} p(s'|s,\pi(s)) v(s') & \text{for } s = s_n \\
    v(s) & \text{otherwise}
    \end{cases} \\
    &\leq \begin{cases}
        r(s, \pi(s)) + \gamma \sum_{s'} p(s'|s,\pi(s)) u(s') & \text{for } s = s_n \\
     u(s) & \text{otherwise}
    \end{cases}\\
    &= T_{\pi, s_n}u(s).
\end{align}
\end{proof}

\begin{customlemma}{1}[Monotonicity] \label{appendix:monoton}
   The iterates of DAVI, $(v_n)_{n \geq 0}$ is a monotonically increasing sequence: $v_n \leq v_{n+1}$ for all $n \in \mathbb{N}_{0}$, if $r(s,a) \in [0,1]$ for any $s,a \in \cS \times \A$ and if DAVI is initialized according to (i),(ii), or (iii) of \cref{asumpt:init}.  
\end{customlemma}
\begin{proof}
    We show $(v_n)_{n \geq 0}$  is a monotonically increasing sequence by induction.  All inequalities between vectors henceforth are element-wise.  Let $(s_0, s_1,...,s_n, s_{n+1})$ be the sequence of states sampled for update from iteration $1$ to $n+1$. By straight-forward calculation, we show $v_1 \geq v_0$.  For all rewards in $[0,1]$ and for any $s \in \cS$,
\begin{align}
    \text{case } i: v_1(s) &= \max_{a \in \A_0 \cup \pi_0(s)} \left \{r(s, a) + \gamma \sum_{s'}p(s'|s,a) 0 \right\} \\ 
    & \geq r(s, \pi_0(s)) + \gamma \sum_{s'}p(s'|s,\pi_0(s)) 0 \\
    &= L^{v_0}(s, \pi_0(s)) \geq 0 = v_0(s)\\
    \text{case } ii: v_1(s) &= \max_{a \in \A_0 \cup \pi_0(s)} \left \{r(s,a) + \gamma  \sum_{s'}p(s'|s,a)(-c) \right\} \\
    & \geq r(s, \pi_0(s)) + \gamma  \sum_{s'}p(s'|s,\pi_0(s)))(-c) = L^{v_0}(s, \pi_0(s))\\
    &= -\gamma c + r(s, \pi_0(s)) \geq -c = v_0(s) \\
    \text{case } iii: v_1(s) &= \max_{a \in \A_0 \cup \pi_0(s)} \left \{r(s,a) + \gamma  \sum_{s'}p(s'|s,a)v_0(s') \right\} \\
    & \geq r(s, \pi_0(s)) + \gamma \sum_{s'}p(s'|s,\pi_0(s)) v_0(s') = L^{v_0}(s, \pi_0(s)) \\
    &\geq v_0(s) \quad \text{by assumption}.
\end{align}
Thus, $v_1(s_0) \geq v_1(s_0)$.  For all other states $s \neq s_0$, $v_0(s) = v_1(s)$.  Therefore, $v_1 \geq v_0$.  Now, assume $v_n \geq \dots \geq v_{0}$ with $n \geq 1$, then for any $s \in \cS$,
\begin{align}
    v_{n+1}(s) &= \Topt{n}v_n(s)  \\
    & = \begin{cases}
        \max_{a \in \A_n \cup \pi_n(s)} L^{v_n}(s,a) & \text{if } s = s_{n} \\
        v_n(s) & \text{otherwise } 
    \end{cases} \\
    &\geq \begin{cases}
        L^{v_n}(s,\pi_n(s)) & \text{if } s = s_{n} \\
        v_n(s) & \text{otherwise } 
    \end{cases} \\
    & \geq \begin{cases}
        L^{v_{n-1}}(s,\pi_n(s)) & \text{if } s = s_{n}  \quad \text{ by assumption $v_n \geq v_{n-1}$}\\
        v_{n-1}(s) & \text{otherwise }. \label{eq:18}
    \end{cases}
\end{align}
If $s_n = s_{n-1}$, then \cref{eq:18} is $\Topt{\pi_n, s_{n-1}} v_{n-1}$. By \cref{def:appendex_davi_op}, $\Topt{\pi_n, s_{n-1}} v_{n-1}= v_n$. Hence, $v_{n+1} \geq v_{n}$.  However, if $s_n \neq s_{n-1}$, we have to do more work.  There are two possible cases. The first case is that $s_n$ has been sampled for update before.  That is, let $1 < j \leq n$ s.t. $s_{n-j}$ is the last time that $s_n$ is sampled for update.  Then $s_n = s_{n-j}$, and $v_n(s_n) = v_{n-j+1}(s_n)$ and $\pi_{n}(s_n) = \pi_{n-j+1}(s_n)$.  By assumption, $v_{n} \geq ... \geq v_{n-j} \geq ... \geq v_0$, then 
\begin{align}
    v_{n+1}(s_n) &= 
        \max_{a \in \A_n \cup \pi_n(s_n) } L^{v_n}(s_n, a) \geq L^{v_n}(s_n, \pi_n(s_n))  \\
    &\geq L^{v_{n-j}}(s_n, \pi_n(s_n)) \quad \text{by assumption } v_n \geq v_{n-j}\\
    &= L^{v_{n-j}}(s_n, \pi_{n-j+1}(s_n)) \\
    &= T_{\pi_{n-j+1}, s_{n-j} }v_{n-j}(s_n) \quad \text{by \cref{eq:davi_t_pi}}\\
    &= v_{n-j+1}(s_n)  \quad \text{by \cref{def:appendex_davi_op}}\\
    &= v_n(s_n).
\end{align}
We have just showed that $v_{n+1}(s_n) \geq v_{n}(s_n)$, and for all other state $s \neq s_n$, $v_{n+1}(s) = v_{n}(s)$. For the second case, $s_n$ has not been sampled for updated before $n$, then $v_n(s_n) = v_0(s_n)$ and $\pi_n(s_n) = \pi_{0}(s_n)$.  By assumption, $v_n \geq ... \geq v_0$, then
\begin{align}
    v_{n+1}(s_n) &= 
        \max_{a \in \A_n \cup \pi_n(s_n) }L^{v_n}(s_n, a) \geq L^{v_n}(s_n, \pi_n(s_n)) \\
        &\geq L^{v_0}(s_n, \pi_n(s_n)) \\
        &= L^{v_0}(s_n, \pi_0(s_n)) \geq v_0(s_n) \quad \text{shown in base case}\\
        &= v_n(s_n).
\end{align}
For all other state $s \neq s_n$, $v_{n+1}(s) = v_{n}(s)$.  Altogether, $v_{n+1} \geq v_n$ for all $n \in \mathbb{N}_0$.  
\end{proof}

\begin{customcorollary}{1}[Computational complexity of obtaining an $\epsilon$-optimal policy] \label{proof:complexity}
 Fix an $\epsilon \in (0, \norm{v^* - v_0})$, and assume DAVI initializes according to (i), (ii), or (iii) of \cref{asumpt:init}.  Define 
 \begin{align}
     H_{\gamma, \epsilon} \doteq \ln \left(\frac{\norm{v^* - v_0}}{\epsilon}\right)/{1-\gamma}
 \end{align}
 as a horizon term.  Then, DAVI runs for at least
 \begin{align}
     \tau = H_{\gamma, \epsilon} \left(\ln\left(\frac{S H_{\gamma, \epsilon}}{\delta} \right)/\ln\left( \frac{1}{1-q_{min}}\right)\right)
 \end{align}
 iterations, returns an $\epsilon$-optimal policy $\pi_n: v_{\pi_n} \geq v^* - \epsilon \1$ with probability at least $1-\delta$ using $O\left(mS \tau \right)$ elementary arithmetic and logical operations.  Note that $\norm{v^* - v_0}$ is unknown but it can be upper bounded by $\frac{1}{1-\gamma} + \norm{v_0}$ given rewards are in $[0,1]$.
\end{customcorollary}
 \begin{proof}
Recall from \cref{lemma:monotone}, DAVI's value iterates, $v_n \to v^*$ monotonically from below (i.e., $v_n \geq v_{n-1} \geq \dots \geq v_0$ ).  Using this result, one can show $L^{v_n}(s, \pi_n(s)) \geq v_n(s)$ for all $s \in \cS$ and $n \in \mathbb{N}_0$ following an induction process.  We have already shown in the proof \cref{lemma:monotone} that $L^{v_0}(s, \pi_0(s)) \geq v_0(s)$ for any $s \in \cS$ in the base case.  Assume that $L^{v_n}(s, \pi_n(s)) \geq v_n(s)$ for any $s \in \cS$, we will show that $L^{v_{n+1}}(s, \pi_{n+1}(s)) \geq v_{n+1}(s)$.  For any $n \in \mathbb{N}_0$ and $s_n \in \cS$, let $\pi_{n+1}(s_n) = \argmax_{a \in \A_n \cup \pi_n(s_n)} L^{v_n}(s_n,a)$ with $\pi_{n+1}(\bar s) = \pi_{n}(\bar s)$ for all other $\bar s \neq s_n$.  

For the case when $s = s_n$,
\begin{align}
    v_{n+1}(s) = T_{\pi_{n+1}, s_n} v_n(s) &= L^{v_n}(s, \pi_{n+1}(s)) \\
    & \leq L^{v_{n+1}}(s, \pi_{n+1}(s)) \quad \text{by } v_n \leq v_{n+1}.
\end{align}
For the case when $s \neq s_n$, then $v_{n+1}(s) = v_n(s)$ and $\pi_{n+1}(s) = \pi_{n}(s)$, and thus
\begin{align}
    v_{n+1}(s) = T_{\pi_{n+1}, s_n} v_n(s) &= v_n(s) \leq L^{v_n}(s, \pi_n(s)) \quad \text{by assumption}\\
    &= L^{v_{n+1}}(s, \pi_{n+1}(s)).
\end{align}
Altogether, we get $L^{v_{n+1}}(s, \pi_{n+1}(s)) \geq v_{n+1}(s)$ for any $s \in \cS$, which concludes the induction.

Now, we show that $v_{\pi_n} \geq v_n$ for any $n \in \mathbb{N}_0$ using the result $L^{v_n}(s, \pi_n(s)) \geq v_n(s)$ for any $s \in \cS$ and $n \in \mathbb{N}_0$.  Fix $n$ and if we are to apply the policy evaluation operator $T_{\pi_n}$ that satisfy \cref{lemma:fixpoint}(1) to every state $s \in \cS$, then we obtain
\begin{align}
    T_{\pi_n}v_n(s) = L^{v_n}(s, \pi_n(s)) \geq v_n(s).
\end{align}
Therefore, $T_{\pi_n}v_n \geq v_n$.  By applying the $T_{\pi_n}$ operator to $T_{\pi_n} v_n \geq v_n$ repeatedly and by using the monotonicity of $T_{\pi_n}$, we have for any $k \geq 0$, 
\begin{align}
    T^{k}_{\pi_n}v_n &\geq T^{k-1}_{\pi_n}v_n \geq \cdots \geq v_n.
\end{align}
By taking limits of both sides of $T_{\pi_n}^k v_n \geq v_n$ as $k \to \infty$, we get $v_{\pi_n} \geq v_n$.  Therefore, 
\begin{align}
    0 \leq v^* - v_{\pi_n} \leq v^* - v_n \implies
    \norm{v^* - v_{\pi_n}} \leq \norm{v^* - v_n}.\label{eq:48}
\end{align}

Next, recall from the proof of \cref{thm:conv_rate} that for a given $l \in \mathbb{N}$, and with probability $1 - \delta$, $v_n$ of DAVI would have $\gamma$-contracted at least $l$ times: $\norm{v^* - v_n} \leq \gamma^l \norm{v^* - v_0}$, with 
$n \geq l \left \lceil \ln \left( \frac{Sl}{\delta} \right) \Big / \ln \left( \frac{1}{1 - q_{min}}\right) \right \rceil$.  
Following from \cref{eq:48}, with probability $1 - \delta$, 
\begin{align}
    \norm{v^* - v_{\pi_n}} \leq \norm{v^* - v_n} \leq \gamma^l \norm{v^* - v_0}. 
\end{align}
By setting $\gamma^l \norm{v^* - v_0} = \epsilon$ and solve for $l$, we get:
\begin{align}
    l &= \ln{\frac{\norm{v^*-v_0}}{\epsilon}}/\ln\left(\frac{1}{\gamma} \right). 
\end{align}
We observe that $
    \ln{\left(\frac{\norm{v^*-v_0}}{\epsilon}\right)}/\ln\left(\frac{1}{\gamma} \right) \leq \ln{\left(\frac{\norm{v^*-v_0}}{\epsilon}\right)}/(1-\gamma) \doteq H_{\gamma, \epsilon}.$
To compute $v_n$, DAVI takes $O(mS)$ elementary arithmetic operations.  With probability $1-\delta$, DAVI obtains an $\epsilon$-optimal policy with
\begin{align}
    O(mSn) &= 
   O\left(mS H_{\gamma, \epsilon}\ln\left( \frac{S H_{\gamma, \epsilon}}{\delta} \right)/\ln\left( \frac{1}{1-q_{min}}\right)\right) \label{eq:o_complex}
\end{align}
arithmetic and logical operations.
\end{proof}

\begin{customcorollary}{2}[Computational complexity of obtaining an optimal policy] \label{appendix:opt_policy_complexity}
 Assume DAVI initializes according to (i), (ii), or (iii) of \cref{asumpt:init}.  Define the horizon term
\begin{align}
     H_{\gamma, \Delta^{v^*}} \doteq \ln\left(\frac{\norm{v^*-v_0}}{\Delta^{v^*}} \right)/(1-\gamma),
 \end{align}
 where $\Delta^{v^*}$ is the optimality capture region defined in \cref{def:gap_opt_cap}. Then, DAVI returns an optimal policy $\pi^* \in \Pi^*$ with probability $1-\delta$, requiring
 \begin{align}
     O \left( mS H_{\gamma, \Delta^{v^*}} \ln\left( \frac{SH_{\gamma, \Delta^{v^*}}}{\delta} \right)/\ln\left( \frac{1}{1-q_{min}}\right) \right) 
 \end{align}
 elementary arithmetic operations.  Note that $\norm{v^* - v_0}$ is unknown but it can be upper bounded by $\frac{1}{1-\gamma} + \norm{v_0}$ given rewards are in $[0,1]$.
 \end{customcorollary}
\begin{proof}
We first show that any $\pi_n$ such that $v^{\pi_n}>v^*- \Delta^{v^*} \1$ is an optimal policy.  We prove this by contradiction.  Assume $\pi_n$ is not optimal but satisfies $v_{\pi_n} > v^* - \Delta^{v^*} \1$, then for any $s \in \cS$
\begin{align}
    L^{v^*}(s, \pi_n(s)) &< L^{v^*}(s, \pi^*(s)) \\
    &\implies L^{v^*}(s, \pi^*(s)) - L^{v^*}(s, \pi_n(s)) > 0 \\
    &\implies L^{v^*}(s, \pi^*(s)) - L^{v^*}(s, \pi_n(s)) \geq \Delta^{v^*}\quad \text{by \cref{def:gap_opt_cap}} \\
    &\implies L^{v^*}(s, \pi^*(s)) - L^{v_{\pi_n}}(s, \pi_n(s)) \geq \Delta^{v^*}\\
    &\implies v^*(s) - v_{\pi_n}(s) \geq \Delta^{v^*} \\
    &\implies v_{\pi_n}(s) \leq v^*(s) - \Delta^{v^*}.
\end{align}
This contradicts the assumption and $\pi_n$ must be optimal. It is straight-forward to show that the result of \cref{cor:complexity} still holds if we require $\pi_n: v_{\pi_n} > v^* - \epsilon \1$ instead of $\pi_n: v_{\pi_n} \geq v^* - \epsilon \1$. We can then apply this result to show that DAVI returns policy $\pi_n$ such that $\pi_n : v_{\pi_n} > v^* - \Delta^{v^*} \1$, and thus an optimal policy, with probability $1-\delta$ within 
\begin{align}
   O\left(mS H_{\gamma, \Delta^{v^*}} \left(\ln\left( \frac{S H_{\gamma, \Delta^{v^*}}}{\delta} \right)/\ln\left( \frac{1}{1-q_{min}}\right)\right)\right)
\end{align}
arithmetic and logical operations. 
\end{proof}

Now we show an alternative proof to the convergence of DAVI with any initialization.  Before we prove the main result, we define the following supporting lemmas.
\begin{lemma}[\cite{Williams93analysisof}] \label{lemma:close_enough}
  Let $v, u \in \R^S, s \in \cS$.  
  
  Let $\pi(s) = \argmax_{a \in \A} L^v(s,a)$ and an $a \in \A$ satisfies $L^u(s,a) \geq L^u(s, \pi(s))$.  Then 
  \begin{align}
      \norm{v - u} < \frac{\Delta^{v}}{2\gamma} \label{eq:capture_region}
  \end{align}
  implies that $L^v(s,\pi(s)) = L^{v}(s,a)$.  
\end{lemma}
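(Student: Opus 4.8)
The plan is to argue by contradiction, using the fact that the look-ahead values $L^v$ and $L^u$ differ by at most $\gamma\norm{v-u}$ for every action, together with the definition of the gap $\Delta^v(s)$ from \cref{def:gap_opt_cap}.

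First I would record the basic perturbation estimate, which is the crux of the whole argument. For any action $b \in \A$ the reward term $r(s,b)$ cancels and the transition probabilities sum to one, so
\[
|L^v(s,b) - L^u(s,b)| = \gamma \left| \sum_{s'} p(s'|s,b)\big(v(s') - u(s')\big)\right| \leq \gamma \norm{v-u}.
\]
This is the only place the structure of $L^{(\cdot)}$ enters, and it will be applied to both $b = \pi(s)$ and $b = a$.

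Next I would suppose, for contradiction, that $L^v(s,\pi(s)) \neq L^v(s,a)$. Since $\pi(s)$ is greedy with respect to $v$, we have $L^v(s,\pi(s)) = \max_{a' \in \A} L^v(s,a') \geq L^v(s,a)$, so the assumed inequality is in fact strict and $a$ is strictly suboptimal under $v$. By the definition of $\Delta^v(s)$ as the smallest \emph{positive} gap between the greedy look-ahead value and that of any other action, this suboptimality forces
\[
L^v(s,\pi(s)) - L^v(s,a) \geq \Delta^v(s) \geq \Delta^v.
\]

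Finally I would chain the two facts by decomposing
\[
L^v(s,\pi(s)) - L^v(s,a) = \big[L^v(s,\pi(s)) - L^u(s,\pi(s))\big] + \big[L^u(s,\pi(s)) - L^u(s,a)\big] + \big[L^u(s,a) - L^v(s,a)\big].
\]
The middle bracket is $\leq 0$ by the hypothesis $L^u(s,a) \geq L^u(s,\pi(s))$, while the first and third brackets are each bounded in absolute value by $\gamma\norm{v-u}$ via the perturbation estimate. Hence $\Delta^v \leq 2\gamma\norm{v-u}$, i.e. $\norm{v-u} \geq \frac{\Delta^v}{2\gamma}$, contradicting the hypothesis $\norm{v-u} < \frac{\Delta^v}{2\gamma}$; therefore $L^v(s,\pi(s)) = L^v(s,a)$. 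I do not anticipate a serious obstacle here: the single delicate point is reading off from the definition of $\Delta^v(s)$ that a \emph{strictly} suboptimal action incurs a gap of at least $\Delta^v(s)\ge\Delta^v$, and observing that the factor $2\gamma$ in the capture radius is precisely what the two-sided ($\pi(s)$ and $a$) perturbation of the look-ahead values produces.
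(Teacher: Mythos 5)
Your proof is correct, and it is the standard argument: the paper itself does not prove this lemma but imports it from \cite{Williams93analysisof}, and your three-term decomposition with the perturbation bound $|L^v(s,b)-L^u(s,b)|\leq\gamma\norm{v-u}$ is exactly the reasoning that result rests on, including the correct reading of $\Delta^v(s)$ as the minimum \emph{strictly positive} gap (which is the one delicate point, and you handle it). The only degenerate cases you leave implicit — $\gamma=0$, or all actions greedy so the gap set minus $\{0\}$ is empty — make the conclusion trivially true, so nothing is lost.
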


\begin{lemma} \label{lemma:pi_optimal}
  Given $v \in \R^S$ which satisfies $\norm{v^* - v} < \frac{\Delta^{v^*}}{2\gamma}$ (i.e., $v$ is inside the optimality capture region), if an action $a$ satisfies $L^v(s,a) = \max_{a' \in \A} L^{v}(s,a')$, then $a$ is an optimal action at $s$. 
\end{lemma}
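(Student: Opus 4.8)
The plan is to deduce this directly from \cref{lemma:close_enough} by a careful role-swap of its two vector arguments, together with the standard fact that an optimal action at $s$ is precisely a greedy action with respect to $v^*$. Concretely, I would first recall that $Tv^* = v^*$ (\cref{def:bell}), so $a$ is optimal at $s$ if and only if $L^{v^*}(s,a) = \max_{a' \in \A} L^{v^*}(s,a') = v^*(s)$; this reduces the whole claim to showing that the $v$-greedy action $a$ attains the $v^*$-maximum of the look-ahead values.

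For the main step I would instantiate \cref{lemma:close_enough} with its abstract ``$v$'' set to $v^*$ and its abstract ``$u$'' set to the given $v$. Let $\pi(s) \in \argmax_{a' \in \A} L^{v^*}(s,a')$ be any optimal (i.e.\ $v^*$-greedy) action at $s$. Since the given $a$ maximizes $L^{v}(s,\cdot)$ by hypothesis, it certainly satisfies the weaker requirement $L^{v}(s,a) \geq L^{v}(s,\pi(s))$, which is exactly the condition ``$L^u(s,a) \geq L^u(s,\pi(s))$'' of \cref{lemma:close_enough} under the substitution. The remaining hypothesis of that lemma, $\norm{v^* - v} < \Delta^{v^*}/(2\gamma)$ (note $\Delta^{v}$ becomes $\Delta^{v^*}$ under the substitution), is precisely the assumption that $v$ lies in the optimality capture region of \cref{def:gap_opt_cap}. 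Hence \cref{lemma:close_enough} yields
\begin{align}
    L^{v^*}(s,\pi(s)) = L^{v^*}(s,a).
\end{align}
Because $\pi(s)$ is $v^*$-greedy, the left-hand side equals $v^*(s) = \max_{a' \in \A} L^{v^*}(s,a')$, so $L^{v^*}(s,a) = v^*(s)$ and therefore $a$ is optimal at $s$, as required.

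I do not expect any serious obstacle here: the argument is essentially a bookkeeping application of \cref{lemma:close_enough}, and the only point demanding care is getting the variable correspondence right (the lemma's conclusion is stated in terms of equality of look-ahead values with respect to its ``$v$''-argument, which is $v^*$ in my instantiation, not with respect to the approximate iterate). The one conceptual ingredient I rely on beyond that lemma is the identification of ``optimal action'' with ``$v^*$-greedy action,'' which follows from $Tv^* = v^*$ in \cref{lemma:fixpoint}(2).
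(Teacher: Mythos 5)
Your proposal is correct and follows essentially the same route as the paper's own proof: both apply \cref{lemma:close_enough} with its abstract ``$v$'' instantiated as $v^*$ and its ``$u$'' as the given iterate $v$, using the fact that the $v$-greedy action $a$ trivially satisfies $L^v(s,a)\geq L^v(s,\pi^*(s))$, to conclude $L^{v^*}(s,a)=L^{v^*}(s,\pi^*(s))$. Your version is merely more explicit than the paper's about the variable correspondence and about why $v^*$-greediness is equivalent to optimality.
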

\begin{proof}
For any $s \in \cS$, let the optimal policy at $s$ be $\pi^*(s) = \argmax_{a \in \A} L^{v*}(s,a)$ and $\pi(s) = \argmax_{a \in \A} L^v(s,a)$, then
\begin{align}
    L^v(s,\pi(s)) \geq L^v(s, \pi^*(s)).
\end{align}
Since $\norm{v^* - v} < \frac{\Delta^{v^*}}{2\gamma}$ and by \cref{lemma:close_enough}, $L^{v^*}(s, \pi^*(s)) = L^{v^*}(s, \pi(s))$.  
\end{proof}

\begin{lemma}[Stochastically always \citep{Williams93analysisof}] \label{lemma:stochastic_always}
  Let $X$ be a set of finite operators on $\A^S \times \R^S$.    We say a stochastic process is stochastic always if every operator in $X$ has a non-zero probability of being drawn.    Let $\Sigma$ be an infinite sequence operator from $X$ generated by a stochastic always stochastic process.    Let $\Sigma'$ be a given finite sequence of operators from $X$, then
   \begin{enumerate}
       \item $\Sigma'$ appears as a contiguous subsequence of $\Sigma$ with probability 1, and
       \item $\Sigma'$ appears infinitely often as a contiguous subsequence of $\Sigma$ with probability 1.
   \end{enumerate}
\end{lemma}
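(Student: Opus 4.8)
The plan is to reduce both claims to a block decomposition followed by a Borel--Cantelli argument. Write $L = |\Sigma'|$ for the (finite) length of the target sequence and list its entries as $\Sigma' = (\sigma'_1, \dots, \sigma'_L)$. First I would partition the positions of the infinite sequence $\Sigma$ into disjoint consecutive blocks of length $L$, where the $j$-th block $B_j$ occupies positions $(j-1)L + 1, \dots, jL$. For each $j$, let $A_j$ be the event that the $L$ operators drawn in block $B_j$ coincide with $\sigma'_1, \dots, \sigma'_L$ in order. A match of $\Sigma'$ inside any single block is in particular an occurrence of $\Sigma'$ as a contiguous subsequence of $\Sigma$, so it suffices to control the events $A_j$.

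The key quantitative step is a uniform positive lower bound on the (conditional) probability of $A_j$. Since $X$ is finite and the process is stochastic-always, every operator in $X$ is drawn with probability bounded below by some $p > 0$ at each step, uniformly over the history; taking $p$ to be the minimum, over the finitely many operators, of their conditional draw probabilities gives $\Prob(A_j \mid \mathcal{F}_{j-1}) \geq p^L =: \rho > 0$, where $\mathcal{F}_{j-1}$ is the $\sigma$-algebra generated by all draws up to the end of block $B_{j-1}$. Because the blocks are disjoint, I can now invoke the conditional (Lévy) second Borel--Cantelli lemma: as $\sum_{j} \Prob(A_j \mid \mathcal{F}_{j-1}) \geq \sum_j \rho = \infty$ almost surely, the events $A_j$ occur infinitely often with probability $1$. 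Each such occurrence is an occurrence of $\Sigma'$ as a contiguous subsequence, which establishes claim (2); claim (1) is then immediate, since appearing infinitely often in particular means appearing at least once.

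For completeness I would also note the more elementary direct route to claim (1) that avoids the infinitely-often machinery: conditioning successively on the disjoint blocks and using the uniform lower bound gives $\Prob\big(\bigcap_{j=1}^{N} A_j^{c}\big) \leq (1-\rho)\,\Prob\big(\bigcap_{j=1}^{N-1} A_j^{c}\big) \leq \dots \leq (1-\rho)^{N} \to 0$ as $N \to \infty$, whence the probability that $\Sigma'$ never appears is zero.

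The main obstacle is the \emph{uniformity and dependence structure} of the draws. If the process were i.i.d., the $A_j$ would be genuinely independent with common probability $\rho$ and the classical Borel--Cantelli lemmas would apply verbatim. The statement as phrased, however, only asserts that each operator has non-zero probability of being drawn, which a priori permits history-dependent sampling; the care needed is to extract a single $p > 0$ that lower-bounds every conditional draw probability (justified by the finiteness of $X$) and to use the conditional form of the second Borel--Cantelli lemma rather than its independent version. Once this uniform lower bound is in hand, the rest of the argument is routine.
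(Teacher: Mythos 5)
The paper itself gives no proof of this lemma: it is imported by citation from Williams and Baird (1993) and used as a black box in the appendix. Your block-decomposition argument is the standard way to prove it and is essentially correct: partitioning $\Sigma$ into disjoint blocks of length $L=|\Sigma'|$, lower-bounding the conditional probability that a block spells out $\Sigma'$ by $\rho = p^L$, and then either telescoping $\Prob\bigl(\cap_{j=1}^{N}A_j^c\bigr)\leq(1-\rho)^N\to 0$ for claim (1) or invoking the conditional second Borel--Cantelli lemma for claim (2), gives both conclusions. The one step you should not wave through is the existence of the uniform bound $p>0$. Finiteness of $X$ guarantees that at any \emph{fixed} step and history the minimum draw probability over $X$ is positive, but if the sampling law is allowed to vary with time or history, the infimum over all steps can still be $0$: an operator drawn with probability $2^{-n}$ at step $n$ has non-zero probability of being drawn at every step, yet is drawn only finitely often almost surely, and then a $\Sigma'$ containing it need not appear infinitely often (claim (1) can also fail, since $\prod_j(1-2^{-jL})>0$). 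So ``every operator has a non-zero probability of being drawn'' must be read as a \emph{uniform} positive lower bound on the conditional draw probabilities; this is automatic in the paper's application, where the state distribution and the action-set distribution are fixed inputs to DAVI, so the draws are i.i.d.\ and the minimum over the finite set $X$ serves as $p$. With that reading made explicit, your proof is complete and self-contained, which is arguably an improvement over the paper's bare citation.
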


\begin{theorem}[Convergence of DAVI with any initialisation]
   Let $\tilde \A$ be some arbitrary action subset of $\A$, and let $X = \{I_{\tilde \A, s},  T_{s} | s \in \cS \}$ be a set of DAVI operators that operate on $\A^S \times \R^S$ that is the joint space of policy and value function, where
   \begin{align}
       \pi_{n+1}(s)=I_{\tilde \A, s_n} \pi_n (s) = \begin{cases}
           \argmax_{a \in \tilde \A \cup \pi_n(s)} L^{v_n}(s,a) & \text{if } s=s_n \\
           \pi_n(s) & \text{otherwise},
       \end{cases} \label{eq:policy_improv}
   \end{align}
   and
   \begin{align}
       v_{n+1}(s) = T_{s_n} v_n(s) = \begin{cases}
           L^{v_n}(s, \pi_{n+1}(s)) & \text{if } s = s_n \\
           v_n(s) & \text{otherwise}. \label{eq:policy_eval}
       \end{cases}
   \end{align}
    Recall $\Pi$ is a set of deterministic policies defined in \cref{sec:background} and $\pi^* \in \Pi$.  Without loss of generality, we write $\cS = 1,...,S$. If DAVI performs the following sequence of operations in some fixed order,    
  \begin{align}
      I_{\tilde \A_1, 1}  T_{1} I_{\tilde \A_2, 2}  T_{2} ... I_{\tilde \A_S, S} T_{S}, \label{eq:good_op}
  \end{align}
  where $\tilde \A_i$ contains the optimal action $\pi^*(i)$ for state $i$, then $v_n$ would have $\gamma$-contracted at least once by the same argument as in the proof of \cref{thm:conv_rate}.  Let $\Sigma'$ be a concatenation of $l$ copies of a sequence \cref{eq:good_op}.  Then, after having performed all the operations in $\Sigma'$, $v_n$ would have $\gamma$-contracted $l$ times.  If $l$ satisfies:
  \begin{align}
      \gamma^l \norm{v^* - v_n} < \frac{\Delta^{v^*}}{2\gamma},
  \end{align}
  then $v_n$ is inside the optimality capture region defined in  \cref{def:gap_opt_cap}.  Once inside the optimality capture region, by \cref{lemma:pi_optimal}, all policies $\pi_n$ are optimal thereafer.  We know from \cref{lemma:fixpoint} (1), $\lim_{n \to \infty} T_{\pi^*} v = v^*$ and by \cref{lemma:bounded} (Boundedness), all $v_n$'s are bounded. Then, the convergence of DAVI with any initialization is ensured as long as all of the states are sampled for update infinitely often. 
  
  The only question is whether if $\Sigma'$ would ever exist in an infinite sequence $\Sigma$ that is generated by running DAVI forever.  To show that such event happens with probability 1, we apply \cref{lemma:stochastic_always}.  To apply \cref{lemma:stochastic_always} (Stochastically always), $X$ must be finite, which indeed it is since the state and action space are finite.  Ensuring that the $\q(s,a)> 0$ guarantees every operator in $X$ is drawn with a non-zero probability.  Therefore, the stochastic process generated by running DAVI would satisfy all the properties of \cref{lemma:stochastic_always}. By \cref{lemma:stochastic_always}, running DAVI forever will generate any contiguous subsequence $\Sigma'$ infinitely often with probability 1. 
  
\end{theorem}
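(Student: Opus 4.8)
The plan is to recast DAVI as a stochastic process that at each iteration draws an operator from the finite set $X = \{ I_{\tilde\A, s}, T_s \mid \tilde\A \subseteq \A,\, s \in \cS\}$ acting on the joint policy--value space $\A^S \times \R^S$, and then to exhibit a single finite block of operators that provably drives the value estimate strictly closer to $v^*$. First I would fix, for each state $i \in \cS$, an action subset $\tilde\A_i$ containing the optimal action $\pi^*(i)$, and analyze the block $I_{\tilde\A_1,1} T_1 \cdots I_{\tilde\A_S,S} T_S$ that sweeps every state once in Gauss--Seidel fashion. Because each $I_{\tilde\A_i,i}$ sets $\pi(i) = \argmax_{a \in \tilde\A_i \cup \pi(i)} L^{v}(i,a)$ over a set containing $\pi^*(i)$, the ensuing value update $T_i$ leaves the $i$-th coordinate sandwiched between $L^{v}(i,\pi^*(i))$ and $T v(i) = \max_a L^v(i,a)$. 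Since both $T_{\pi^*}$ and $T$ are $\gamma$-contractions toward $v^*$ (\cref{lemma:fixpoint}), a Gauss--Seidel version of the standard contraction argument applied to the coordinatewise sandwich $T_{\pi^*} v \le v' \le T v$ yields $\norm{v^* - v'} \le \gamma \norm{v^* - v}$ for the vector $v'$ produced by the block; monotonicity of the per-coordinate operators (\cref{appendix:monotone_op}) is what lets the in-place updates preserve this estimate without needing monotonicity of the whole iterate sequence.

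Next I would concatenate $l$ copies of this block into one finite sequence $\Sigma'$ and conclude that applying $\Sigma'$ produces $l$ successive $\gamma$-contractions, so $\norm{v^* - v} \le \gamma^l \norm{v^* - v_{\text{start}}}$. Since \cref{lemma:bounded} bounds $\norm{v^* - v_{\text{start}}}$ by a constant uniform over all iterates regardless of initialization, I can pick $l$ large enough that $\gamma^l \norm{v^* - v_{\text{start}}} < \capture$, placing $v$ inside the optimality-capture region of \cref{def:gap_opt_cap}. I would then argue this region is \emph{absorbing}: once $v$ lies in it, \cref{lemma:pi_optimal} makes every greedy choice optimal, and each subsequent asynchronous value update contracts the sup-norm distance to $v^*$ and hence cannot leave the region. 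From that point DAVI performs only asynchronous policy evaluation of an optimal policy, so \cref{lemma:fixpoint}(1), together with the fact that each state is sampled infinitely often (guaranteed by $\q(s,a) > 0$), forces $v_n \to v^*$ with probability $1$.

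The conceptual crux is showing that the finite block $\Sigma'$ actually occurs as a contiguous subsequence of the infinite operator stream generated by running DAVI forever, and here I would invoke \cref{lemma:stochastic_always}. Its hypotheses hold because $X$ is finite ($\cS$ and $\A$ are finite) and the assumption $\q(s,a) > 0$ ensures each individual operator --- in particular each $I_{\tilde\A_i,i}$ with $\pi^*(i) \in \tilde\A_i$ and each $T_i$ --- is drawn with strictly positive probability; thus the process is ``stochastically always'' and $\Sigma'$ appears (indeed infinitely often) with probability $1$.

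I expect the main obstacle to be the single-block contraction step rather than the existence step, precisely because dropping \cref{asumpt:init} removes the global monotonicity that \cref{thm:conv_rate} relied upon. The delicate points to verify carefully are that the sandwich bound $T_{\pi^*} v \le v' \le T v$ genuinely yields a $\gamma$-contraction under in-place (Gauss--Seidel) updating, and that membership in the capture region is permanent, so that the tail of the run is honestly a policy evaluation of $\pi^*$; once these are secured, \cref{lemma:stochastic_always} cleanly supplies the remaining probabilistic content.
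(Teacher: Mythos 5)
Your proposal is correct and follows essentially the same route as the paper's own argument: the same finite operator set $X$ on the joint policy--value space, the same full-sweep block $I_{\tilde \A_1, 1} T_1 \cdots I_{\tilde \A_S, S} T_S$ with $\pi^*(i) \in \tilde \A_i$ yielding one $\gamma$-contraction, the same entry into the optimality-capture region of \cref{def:gap_opt_cap} followed by asynchronous policy evaluation of an optimal policy, and the same appeal to \cref{lemma:stochastic_always} for the occurrence of $\Sigma'$ with probability $1$. If anything, your sandwich bound $T_{\pi^*} v \leq v' \leq T v$ with a Gauss--Seidel contraction argument handles the single-block step more carefully than the paper, which simply defers to ``the same argument as in the proof of \cref{thm:conv_rate}'' --- an argument that there relied on monotone convergence from below and hence on the very initialization assumption being dropped here --- and you are also right to flag, and verify, that the capture region is absorbing, a point the paper leaves implicit.
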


\section{Theoretical analysis of Asynchronous VI} \label{appendix:avi}
\cite{neuro} and \cite{Williams93analysisof} have shown Asynchronous VI converges.  We can view Asynchronous VI as a special case of DAVI if the subset of actions sampled in each iteration is the entire action space.  That is for any $s \in \cS$, $v \in \R^S$ and $\pi \in \Pi$, $\max_{a \in \A \cup \pi(s)} L^{v}(s,a) = \max_{a \in \A} L^{v}(s,a)$.  We can follow similar reasoning to the proof of the convergence rate of DAVI (\cref{thm:conv_rate} )and show the convergence rate of Asynchronous VI with the $T$ operator defined in \cref{def:avi_op}. However, the sequence of increasing indices $(n_k)_{k=0}^{\infty}$, where $n_0 = 0$ in \cref{thm:conv_rate} takes on a slightly different meaning.  In particular, between the $n_k$-th and $n_{k+1}$-th iteration, all $s \in \cS$ have been updated at least once.  Finally, the computational complexity bound of Asynchronous VI is similar to the computational complexity bound of DAVI with $p_{min} = \min_{s} p(s)$ instead of $q_{min}$.  The computational complexity result is proven similarly to the proof of \cref{cor:complexity} found in \cref{appendix:davi}. 

\begin{definition}[Asynchronous VI operator] \label{def:avi_op}
    Recall $T_{s_n}: \R^S \to \R^S$. For a given $s_n \in \cS$, and for all $s \in \cS$ and $v \in \R^S$, 
\begin{align}
T_{s_n}v(s) \doteq
\begin{cases}
    \max_{a \in \A} L^v(s,a) & \text{if } s = s_n \\
    v(s) & \text{otherwise}. \label{eq:avi_operator}
\end{cases}
\end{align}
Then the iterates of Asynchronous VI evolves according to $v_{n+1} = T_{s_n} v_{n}$ for all $n \in \mathbb{N}_0$.  
\end{definition}

\begin{lemma}[Asynchronous VI Monotonicity] \label{lemma:avi_monotone}
  The iterates of Asynchronous VI, $(v_n)_{n\geq 0}$ is a monotonically increasing sequence: $v_n \leq v_{n+1}$ for all $n \in \mathbb{N}_{0}$, if $r(s,a) \in [0,1]$ for any $s,a \in \cS \times \A$ and if Asynchronous VI is initialized according to (i) or (ii) of \cref{asumpt:init}.
\end{lemma}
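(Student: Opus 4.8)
The plan is to prove $v_n \leq v_{n+1}$ by induction on $n$, mirroring the argument for DAVI's monotonicity in \cref{appendix:monoton} but specialised to the full-action back-up of \cref{def:avi_op}. Note that Asynchronous VI is exactly DAVI with $\A_n = \A$, so $\max_{a \in \A \cup \pi_n(s)} L^{v}(s,a) = \max_{a \in \A} L^{v}(s,a)$ and no best-so-far action need be tracked; this is why only initializations (i) and (ii) of \cref{asumpt:init} are relevant here, since (iii) references the policy $\pi_0$. First I would record that $T_{s_n}$ is monotone, the analogue of \cref{appendix:monotone_op}: if $v \leq u$ then $L^{v}(s,a) \leq L^{u}(s,a)$ for every $a$ because $\gamma \geq 0$ and $p(\cdot\,|s,a) \geq 0$, hence $\max_{a \in \A} L^{v}(s,a) \leq \max_{a \in \A} L^{u}(s,a)$, giving $T_{s_n} v \leq T_{s_n} u$ coordinatewise.

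For the base case I would show $v_1 \geq v_0$ by direct evaluation at the sampled state $s_0$, all other coordinates being unchanged. Under initialization (i), $v_1(s_0) = \max_{a \in \A}\{r(s_0,a) + \gamma \sum_{s'} p(s'|s_0,a)\cdot 0\} = \max_{a} r(s_0,a) \geq 0 = v_0(s_0)$ since rewards lie in $[0,1]$. Under initialization (ii), $v_1(s_0) = \max_{a}\{r(s_0,a) - \gamma c\} \geq -\gamma c \geq -c = v_0(s_0)$, using $r \geq 0$ and $\gamma \in [0,1)$. Since $v_1(s) = v_0(s)$ for $s \neq s_0$, we conclude $v_1 \geq v_0$.

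For the inductive step I would assume $v_n \geq v_{n-1} \geq \cdots \geq v_0$ and show $v_{n+1} \geq v_n$. Only the coordinate $s_n$ changes, so it suffices to show $v_{n+1}(s_n) \geq v_n(s_n)$. The step I expect to be the main obstacle is that, because updates are in-place and one state at a time, the stored value $v_n(s_n)$ need not have been produced at iteration $n$, so I must trace it back to the last iteration at which $s_n$ was updated. Concretely I split into two cases. If $s_n$ has been sampled before, let $s_{n-j} = s_n$ with $1 \leq j \leq n$ be the most recent such iteration, so that $v_n(s_n) = v_{n-j+1}(s_n) = \max_{a \in \A} L^{v_{n-j}}(s_n,a)$; then
\[
v_{n+1}(s_n) = \max_{a \in \A} L^{v_n}(s_n,a) \;\geq\; \max_{a \in \A} L^{v_{n-j}}(s_n,a) \;=\; v_n(s_n),
\]
where the inequality uses $v_n \geq v_{n-j}$ from the inductive hypothesis together with monotonicity of $L^{(\cdot)}$ in the value argument. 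If instead $s_n$ has never been sampled, then $v_n(s_n) = v_0(s_n)$ and $v_{n+1}(s_n) = \max_{a \in \A} L^{v_n}(s_n,a) \geq \max_{a \in \A} L^{v_0}(s_n,a) \geq v_0(s_n)$, the last inequality being exactly the base-case estimate. In both cases $v_{n+1}(s_n) \geq v_n(s_n)$, and since $v_{n+1}(s) = v_n(s)$ for all $s \neq s_n$, this closes the induction and yields $v_{n+1} \geq v_n$ for all $n \in \mathbb{N}_0$.
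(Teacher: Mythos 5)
Your proof is correct and follows essentially the same route as the paper's: induction on $n$, a direct base-case computation for initializations (i) and (ii), and an inductive step that traces $v_n(s_n)$ back to the last iteration at which $s_n$ was updated (or to $v_0$ if it never was), using monotonicity of the look-ahead in the value argument. The only cosmetic difference is that you fold the paper's separate $s_n = s_{n-1}$ case into the general ``sampled before'' case by allowing $j = 1$, which is a harmless streamlining.
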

\begin{proof}
    We show $(v_n)_{n \geq 0}$ is a monotonically increasing sequence by induction.  All inequalities between vectors henceforth are element-wise.  Let $(s_0, s_1,...,s_n, s_{n+1})$ be the sequence of states sampled for update from iteration $1$ to $n+1$. By straight-forward calculation, we show $v_1 \geq v_0$.  For all rewards in $[0,1]$ and any $s \in \cS$,
\begin{align}
    \text{case } i: v_1(s) &= \max_{a \in \A} \left \{r(s, a) + \gamma \sum_{s'}p(s'|s,a) 0 \right\} \geq v_0(s)\\
    \text{case } ii: v_1(s) &= \max_{a \in \A} \left \{r(s,a) + \gamma  \sum_{s'}p(s'|s,a)(-c) \right\} \\
    &= \max_{a \in \A} \left\{ -\gamma c + r(s, a) \right\} \geq v_0(s).
\end{align}
Thus, $v_1(s_0) \geq v_0(s_0)$.  For all other states $s \neq s_0, v_0(s) = v_1(s)$.  Therefore, $v_1 \geq v_0$.  Now, assume $v_n \geq \dots \geq v_{0}$ with $n \geq 1$, then for any $s \in \cS$,
\begin{align}
    v_{n+1}(s) &= T_{s_n} v_n(s)  \\
    & = \begin{cases}
        \max_{a \in \A} L^{v_n}(s,a) & \text{if } s = s_{n} \\
        v_n(s) & \text{otherwise } 
    \end{cases} \\
    &\geq \begin{cases}
       \max_{a \in \A} L^{v_{n-1}}(s,a) & \text{if } s = s_{n}  \quad \text{ by assumption $v_n \geq v_{n-1}$} \\
        v_{n-1}(s) & \text{otherwise }. \label{eq:74}
    \end{cases} 
\end{align}
If $s_n = s_{n-1}$, then \cref{eq:74} is $T_{s_{n-1}} v_{n-1}$. By \cref{def:avi_op}, $T_{s_{n-1}} v_{n-1}= v_n$. Hence, $ v_{n+1} \geq v_n$.  However, if $s_n \neq s_{n-1}$, we have to do more work.  There are two possible cases. The first case is that $s_n$ has been sampled before.  That is, let $1 < j \leq n$ s.t. $s_{n-j}$ is the last time that $s_n$ is sampled for update.  Then $s_n = s_{n-j}$, and $v_n(s_n) = v_{n-j+1}(s_n)$.  By assumption, $v_{n} \geq ... \geq v_{n-j} \geq ... \geq v_0$, then 
\begin{align}
    v_{n+1}(s_n) &= 
        \max_{a \in \A } L^{v_n}(s_n, a) \\
    &\geq \max_{a \in \A} L^{v_{n-j}}(s_{n-j}, a) \quad \text{by assumption } v_n \geq v_{n-j}\\
    &= T_{s_{n-j}} v_{n-j}(s_{n-j}) = v_{n-j+1}(s_{n-j}) = v_n(s_n).
\end{align}
We have just showed that $v_{n+1}(s_n) \geq v_{n}(s_n)$, and for all other state $s \neq s_n$, $v_{n+1}(s) = v_{n}(s)$. For the second case, $s_n$ has not been sampled before $n$, then $v_n(s_n) = v_0(s_n)$.  By assumption, $v_n \geq ... \geq v_0$, then
\begin{align}
    v_{n+1}(s_n) &= 
        \max_{a \in \A}L^{v_n}(s_n, a) \\
        &\geq \max_{a \in \A} L^{v_0}(s_n, a) \quad \text{ by assumption } v_n \geq v_0 \\
        &\geq v_0(s_n) \quad \text{ shown in base case}.
\end{align}
For all other state $s \neq s_n$, $v_{n+1}(s) = v_{n}(s)$.  Altogether, $v_{n+1} \geq v_n$ for all $n \in \mathbb{N}_0$.
\end{proof}

\begin{theorem}[Convergence rate of Asynchronous VI] \label{thm:avi_conv_rate}
   Assume $p(s) > 0$ and $r(s,a) \in [0,1]$ for any $s, a \in \cS \times \A$, and also assume Asynchronous VI initialises according to (i), (ii) of \cref{asumpt:init}.  With $\gamma \in [0,1)$ and probability $1-\delta$, the iterates of Asynchronous VI, $(v_n)_{n \geq 0}$ converges to $v^*$ at a near-geometric rate. In particular, with probability $1-\delta$, for a given $l \in \mathbb{N}$,
   \begin{align}
       \norm{v^* - v_n} \leq \gamma^l \norm{v^* - v_0},
   \end{align}
   for any n satisfying
   \begin{align}
   n \geq l \left \lceil  \ln \left( \frac{S l}{\delta} \right) \Big / \ln \left(\frac{1}{ 1 - p_{min}}\right) \right \rceil,
   \end{align}
   where $p_{min} = \min_{s} p(s)$.
\end{theorem}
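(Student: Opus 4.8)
The plan is to mirror the proof of the DAVI convergence rate in \cref{thm:conv_rate}, substituting the Asynchronous VI operator $T_{s_n}$ of \cref{def:avi_op} for the DAVI operator and replacing $q_{min}$ by $p_{min}$. Since Asynchronous VI is the special case of DAVI in which the sampled action subset is all of $\A$, I would first invoke \cref{lemma:avi_monotone} (monotonicity) and \cref{lemma:bounded} (boundedness) to get that $(v_n)_{n\geq 0}$ is bounded and nondecreasing, and \cref{thm:conv} to conclude $v_n \to v^*$ from below. I would then introduce an increasing index sequence $(n_k)_{k=0}^{\infty}$ with $n_0 = 0$ such that between the $n_k$-th and $n_{k+1}$-th iterations every state $s \in \cS$ is sampled for update at least once. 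The key simplification relative to DAVI is that here one needs only that each state be visited, not that its sampled action subset contain $\pi^*(s)$, because the Asynchronous VI update already maximizes over the full action set.

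The core step is to show that each such block yields one $\gamma$-contraction toward $v^*$. Fixing $k$ and a state $s$, and letting $t(s,k) \in [n_k, n_{k+1}]$ be an index with $s_{t(s,k)} = s$, the definition of $T_{s_n}$ together with monotonicity of the iterates and of $T_{\pi^*}$ gives
\begin{align}
v_{n_{k+1}}(s) \geq v_{t(s,k)+1}(s) = \max_{a \in \A} L^{v_{t(s,k)}}(s,a) \geq L^{v_{t(s,k)}}(s, \pi^*(s)) = T_{\pi^*} v_{t(s,k)}(s) \geq T_{\pi^*} v_{n_k}(s).
\end{align}
As this holds for every $s$, we obtain $v_{n_{k+1}} \geq T_{\pi^*} v_{n_k}$; combining with $v_{n_{k+1}} \leq v^*$ and the contraction property of $T_{\pi^*}$ from \cref{lemma:fixpoint} yields $\norm{v^* - v_{n_{k+1}}} \leq \gamma \norm{v^* - v_{n_k}}$, exactly as in \cref{thm:conv_rate}. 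Iterating over $l$ blocks then produces $\norm{v^* - v_n} \leq \gamma^l \norm{v^* - v_0}$.

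For the high-probability bound I would partition the first $n$ iterations into $l$ consecutive blocks of length $N$ and set $\event_i(s)$ to be the event that state $s$ is updated during block $i$, so that $\event_i = \cap_{s \in \cS} \event_i(s)$ forces a contraction in block $i$ and $\event = \cap_i \event_i$ forces $l$ of them. A union bound gives $\Prob(\event^c) \leq \sum_{i=1}^{l}\sum_{s \in \cS} \Prob(\event_i^c(s))$, and since state $s$ is missed across $N$ independent draws with probability $(1 - p(s))^N \leq (1 - p_{min})^N$, this is at most $S l (1 - p_{min})^N$. Setting $S l (1 - p_{min})^N \leq \delta$ and solving for $N$ reproduces the stated iteration count with $p_{min}$ in place of $q_{min}$. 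I do not anticipate a genuine obstacle: the adaptation is essentially mechanical. The only points requiring care are confirming the contraction survives the full-max update (which it does, even more directly than for DAVI) and ensuring $\event_i(s)$ is defined through the marginal state-sampling probability $p(s)$ rather than a joint state--action probability, so that independence across iterations justifies the $(1 - p(s))^N$ factor.
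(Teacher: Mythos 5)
Your proposal is correct and follows essentially the same route as the paper's proof: monotonicity from \cref{lemma:avi_monotone}, a block decomposition in which every state is updated at least once per block so that $v_{n_{k+1}} \geq T_{\pi^*} v_{n_k}$ yields one $\gamma$-contraction, and a union bound giving $Sl(1-p_{min})^N \leq \delta$. The details you flag (the full max dominating $L^{v}(s,\pi^*(s))$, and using the marginal state probability rather than a joint state--action probability) match the paper's treatment exactly.
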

\begin{proof}
Recall from \cref{lemma:avi_monotone}, we have shown the iterates of Asynchronous VI, $v_n \to v^*$ monotonically from below.  We define
$(n_{k})_{k=0}^{\infty}$ to be a sequence of increasing indices, where $n_0 = 0$, such that  between the $n_k$-th and $n_{k+1}$-th iteration, all state $s \in \cS$ have been updated at least once.  At the $n_{k+1}$-th iteration, $v_{n_{k+1}} \geq T_{\pi^*}v_{n_k}$.  This implies that at the $n_{k+1}$-th iteration, Asynchronous VI would have $\gamma$-contracted at least once:
   \begin{align}
0 \leq v^* - v_{n_{k+1}}  &\leq v^* - T_{\pi^*}v_{n_k}, \implies 
\norm{v^* - v_{n_{k+1}}} \leq \norm{v^* - T_{\pi^*}v_{n_k}},    \\
\norm{v^* - T_{\pi^*}v_{n_k}} &= \norm{T_{\pi^*}v^* - T_{\pi^*}v_{n_k})} \leq \gamma \norm{v^* - v_{n_k}} \\
\implies \norm{v^* - v_{n_{k+1}}} &\leq \gamma \norm{v^* - v_{n_k}}. 
\end{align}

The probability of the failure event 
\begin{align}
    \Prob(\event^c)  &\leq \sum_{i=1}^{l} \sum_{s \in \cS} \Prob(\event_i^c) \\
    & \leq Sl(1-p_{min})^N
\end{align}
with $p_{min} = \min_{s \in \cS} p(s)$ instead of $q_{min}$.  The rest follows similar reasoning to the proof of \cref{thm:conv_rate} and obtain the result.
\end{proof}

\begin{customcorollary}{3}[Computational complexity of Asynchronous VI]
 Fix an $\epsilon \in (0, \norm{v^* - v_0})$, and assume Asynchronous VI initialises according to (i) or (ii) of \cref{asumpt:init}.  Define 
 \begin{align}
     H_{\gamma, \epsilon} \doteq \ln \left(\frac{\norm{v^* - v_0}}{\epsilon}\right)/{1-\gamma}
 \end{align}
 as a horizon term.  Then, Asynchronous VI returns an $\epsilon$-optimal policy $\pi_n: v_{\pi_n} \geq v^* - \epsilon \1$ with probability at least $1-\delta$ using
 \begin{align}
     O\left( A S H_{\gamma, \epsilon} \left(\ln\left(\frac{S H_{\gamma, \epsilon}}{\delta} \right)/\ln\left( \frac{1}{1-p_{min}}\right)\right) \right)
 \end{align}
 elementary arithmetic and logical operations.  Note that $\norm{v^* - v_0}$ is unknown but it can be upper bounded by $\frac{1}{1-\gamma} + \norm{v_0}$ given rewards are in $[0,1]$.
 \end{customcorollary}
  \begin{proof}
Recall from \cref{lemma:avi_monotone}, the iterates of Asynchronous VI,  $v_n \to v^*$ monotonically from below (i.e., $v_n \geq v_{n-1} \geq \dots \geq v_0$ ). For any $n \in \mathbb{N}_0$ and $s_n \in \cS$, let $\pi_{n+1}(s_n) = \argmax_{a \in \A} L^{v_{n}}(s_n,a)$ with $\pi_{n+1}(\bar s) = \pi_n(\bar s)$ for all other $\bar s \neq s_n$. One can show $L^{v_{n}}(s, \pi_n(s)) \geq v_n(s)$ for any $s \in \cS$ and $n \in \mathbb{N}_0$ following a similar argument as in the proof of \cref{cor:complexity}.  Now, we show $v_{\pi_n} \geq v_n$ for any $n \in \mathbb{N}_0$. Fix $n$ and if we are to apply the policy evaluation operator $T_{\pi_n}$ that satisfy \cref{lemma:fixpoint}(1) to every state $s \in \cS$, then we obtain 
\begin{align}
    T_{\pi_n}v_n(s) = L^{v_n}(s, \pi_n(s)) &\geq v_n(s). 
\end{align}
Therefore, $T_{\pi_n}v_n \geq v_n$.  By applying the $T_{\pi_n}$ operator to $T_{\pi_n} v_n \geq v_n$ repeatedly, and by using the monotonicity of $T_{\pi_n}$, we have for any $k \geq 0$, 
\begin{align}
    T^{k}_{\pi_n}v_n &\geq T^{k-1}_{\pi_n}v_n \geq \cdots \geq v_n.
\end{align}
By taking limits of both sides of $T_{\pi_n}^k v_n \geq v_n$ as $k \to \infty$, we get $v_{\pi_n} \geq v_n$.  Therefore, 
\begin{align}
    0 \leq v^* - v_{\pi_n} \leq v^* - v_n \implies
    \norm{v^* - v_{\pi_n}} \leq \norm{v^* - v_n}.\label{eq:95}
\end{align}

Next, recall from the proof of \cref{thm:avi_conv_rate} that for a given $l \in \mathbb{N}$, and with probability $1 - \delta$, $v_n$ of Asynchronous VI would have $\gamma$-contracted at least $l$ times (i.e., $\norm{v^* - v_n} \leq \gamma^l \norm{v^* - v_0}$) with 
$n \geq l \left \lceil \ln \left( \frac{Sl}{\delta} \right) \Big / \ln \left( \frac{1}{1 - p_{min}}\right) \right \rceil$.  
Following from \cref{eq:95}, with probability $1 - \delta$, 
\begin{align}
    \norm{v^* - v_{\pi_n}} \leq \norm{v^* - v_n} \leq \gamma^l \norm{v^* - v_0}. 
\end{align}
By setting $\gamma^l \norm{v^* - v_0} = \epsilon$ and solve for $l$, we get:
\begin{align}
    l &= \ln{\frac{\norm{v^*-v_0}}{\epsilon}}/\ln\left(\frac{1}{\gamma} \right). 
\end{align}
We observe that $
    \ln{\left(\frac{\norm{v^*-v_0}}{\epsilon}\right)}/\ln\left(\frac{1}{\gamma} \right) \leq \ln{\left(\frac{\norm{v^*-v_0}}{\epsilon}\right)}/(1-\gamma) \doteq H_{\gamma, \epsilon}.$
To compute $v_n$, Asynchronous VI takes $O(AS)$ elementary arithmetic operations.  With probability $1-\delta$, Asynchronous VI obtains an $\epsilon$-optimal policy within
\begin{align}
    O(ASn) &= 
   O\left(AS H_{\gamma, \epsilon} \left(\ln\left( \frac{S H_{\gamma, \epsilon}}{\delta} \right)/\ln\left( \frac{1}{1-p_{min}}\right)\right)\right) \label{eq:o_avi_complex}
\end{align}
arithmetic and logical operations.
\end{proof}

\section{More experiments} \label{appendix:exp}
In this section, we show additional experiments with the MDPs described in \cref{sec:exp} with rewards generated via a standard normal and a Pareto distribution.  

Recall that the experiments were set up to see how DAVI's performance is affected by the sparsity of rewards.    Pareto distribution with a shape of 2.5 is a ``heavy-tail" distribution, and the rewards sampled from this distribution could result in a few large values.    On the other hand, the rewards sampled via the standard Normal distribution could result in many similar values.    We hypothesize that DAVI would converge faster than Asynchronous VI in domains with multiple optimal or near-optimal policies, which could be the case in the normal-distributed reward setting.  

The algorithms that will be compared in the experiments are VI, Asynchronous VI, and DAVI.    We implement Asynchronous VI and DAVI using uniform sampling to obtain the states.    DAVI samples a new set of actions via uniform sampling without replacement in each iteration.

\subsection{Single-state experiment}
This experiment consists of a single-state MDP with 10000 actions, and all terminate immediately.    We experiment with two reward distributions: Pareto-reward and Normal-reward.    For Pareto-reward, all actions have rewards generated according to a Pareto distribution with shape 2.5.    For Normal-reward, all actions have rewards generated according to the standard normal distribution.  

\subsection{Multi-reward experiment}
This experiment consists of two MDPs.    The first set consists of a tree with a depth of 2.    Each state has 50 actions, where each action leads to 2 other distinct next states.    All actions terminate at the leaf states.    In one setting, the rewards are distributed according to the Pareto distribution with a shape of 2.5.  In the other setting, the rewards are distributed according to the normal distribution. 

The second set of MDPs consists of a random MDP with 100 states, where each state has 1000 actions.    Each action leads to 10 next states randomly selected from the 100 states with equal probability.    All transitions have a 0.1 probability of terminating.    In one setting, the rewards are distributed according to the Pareto distribution with a shape of 2.5.  In the other setting, the rewards are distributed according to the standard normal distribution.    The $\gamma$ in all of the MDPs are 1.  

\subsection{Discussion}
\Cref{fig:single_pareto_normal} and \Cref{fig:multi_pareto_normal} show the performance of the algorithms.  All graphs included error bars showing the standard error of the mean.  All graphs smoothly increased towards the asymptote except for Asynchronous VI in \Cref{fig:single_pareto_normal} and VI in \Cref{fig:multi_pareto_normal}, whose performances were step-functions \footnote{Asynchronous VI is equivalent to VI in the single-state experiment since there is only one state.}.    The y-axis of each graph showed a state value averaged over 200 runs.    The x-axes showed run-times, which have been adjusted for computations.

In \Cref{fig:single_pareto_normal}, DAVI with $m=1$ was significantly different from that of DAVI with $m = 10, 100, 1000$.  However, in the Normal-reward setting, the performance of DAVI with $m=1$ was much closer to the performance of DAVI with $m=10, 100, 1000$.    In the Pareto-reward setting, where there could only be a few large rewards, the results were similar to that of the needle-in-the-haystack setting of \Cref{fig:single}.   In the Normal-reward setting, where most of the rewards were similar and concentrated around 0, the results were similar to that of the multi-reward setting of \Cref{fig:single}.   
\begin{figure}[!ht]
    \centering
    \includegraphics[keepaspectratio=true,scale=0.43]{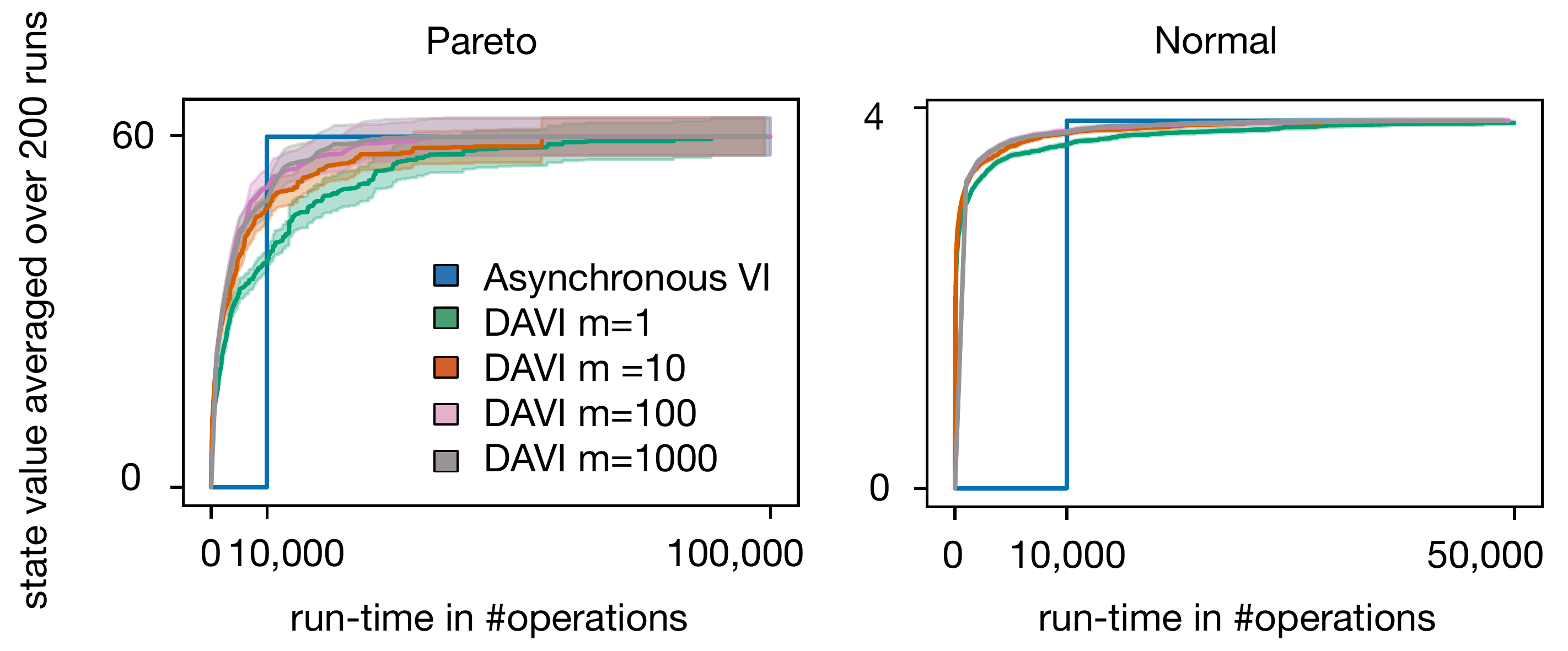}
    \caption{Single-state experiment with $10000$ actions: (a) rewards are Pareto distributed with shape 2.5 (b) rewards are standard normal distributed.  The Asynchronous VI in this experiment is equivalent to VI since there is only one state. We run each instance 200 times with a new MDP generated each time. In each run, each algorithm is initialized to 0.}
    \label{fig:single_pareto_normal}
\end{figure}

In \Cref{fig:multi_pareto_normal} in both tree Pareto-reward and Normal-reward settings (top row), DAVI with $m = 1$ was significantly different from that of DAVI $m=10$.  In the tree setting, with normal-distributed rewards, where there may be multiple actions with similarly large rewards, DAVI $m=10$ converged faster than VI and Asynchronous VI.
\begin{figure}[!ht]
    \centering
    \includegraphics[keepaspectratio=true,scale=0.4]{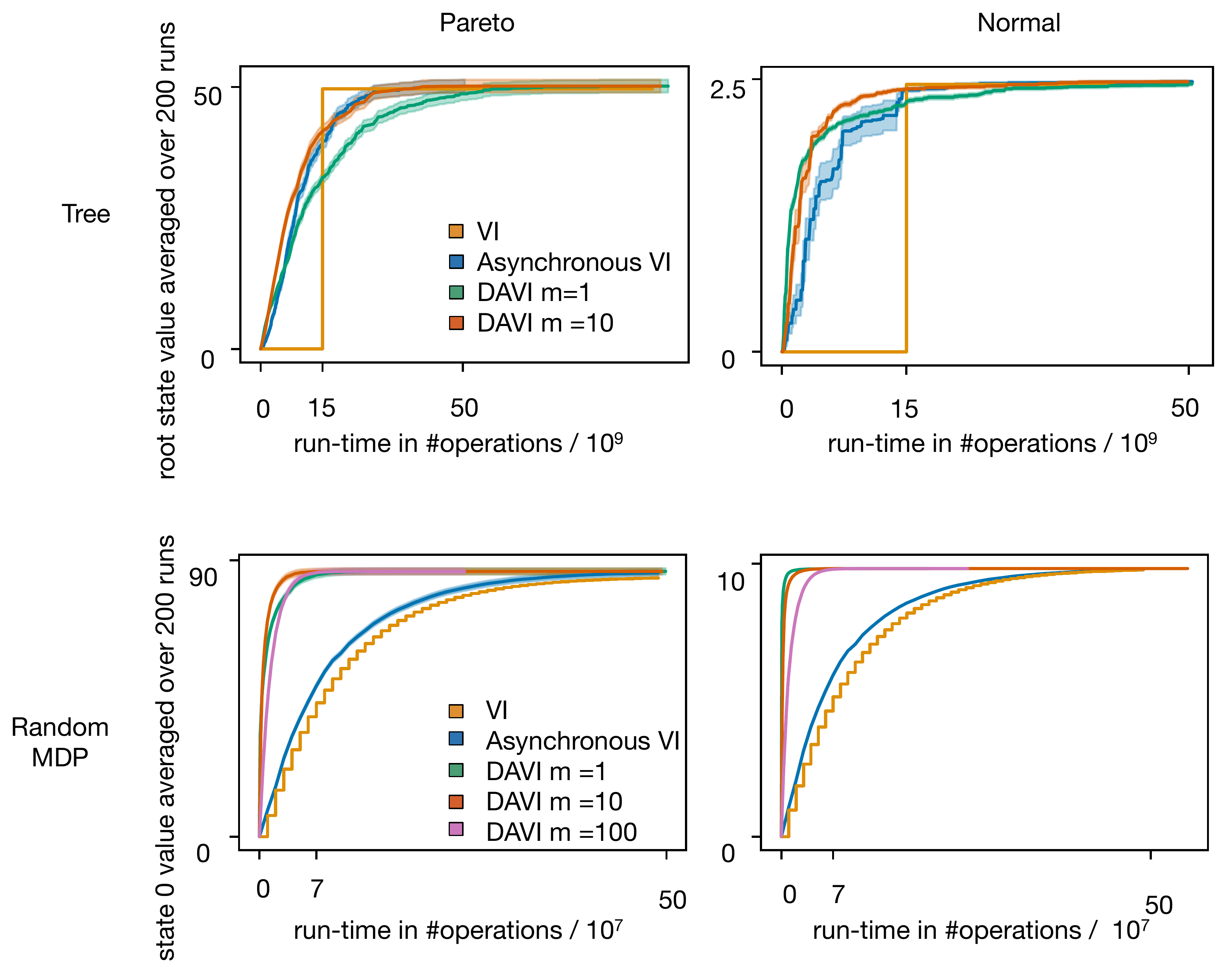}
    \caption{Multi-state experiment: (top row) MDP with a tree structure with Pareto and normal distributed rewards (bottom row) random MDP with Pareto and normal distributed rewards.  We run each instance 200 times with a new MDP generated each time. In each run, all algorithms are initialized to 0. }
    \label{fig:multi_pareto_normal}
\end{figure}

In the random-MDP setting, DAVI, for all values of $m$, converged faster than VI and Asynchronous VI in both the Pareto-reward and Normal-reward settings, as evident in the bottom row of \Cref{fig:multi_pareto_normal}.    As expected, DAVI converged faster than Asynchronous VI and VI in the case of multiple near-optimal policies.    Note, DAVI $m = 100$ was the slowest to converge, a case where the action subset size is large.   This result makes sense as Asynchronous VI with the full action space did not converge as fast as DAVI with smaller action subsets.

\end{document}